\documentclass[12pt]{article}
\usepackage{comment,etex,authblk,amsmath,fullpage,amsfonts,titletoc,titlesec,amssymb}
\usepackage[onehalfspacing]{setspace}
\usepackage{array,multirow,placeins}
\usepackage{subcaption}
\usepackage{natbib}
\usepackage{booktabs}
\usepackage{caption}
\usepackage{amssymb}
\usepackage{mathtools}
\usepackage{amsthm}
\usepackage{multirow}
\usepackage{float}
\usepackage{algorithm}
\usepackage{algpseudocode}
\usepackage{authblk}
\usepackage{titlesec}
% BEGIN: Theorems

\newtheorem{lemma}{Lemma}

% END: Theorems

% BEGIN: Math

\newcommand{\gl}{\lambda}

\newcommand{\gd}{\delta}

\newcommand{\R}{{\mathbb R}}

\newcommand{\cN}{\mathcal N}

%{EP}

%\newcommand{\op}{^\top}

\newcommand{\bx}{{\bf x}}
\newcommand{\vertiii}[1]{{\left\vert\kern-0.25ex\left\vert\kern-0.25ex\left\vert #1 
\right\vert\kern-0.25ex\right\vert\kern-0.25ex\right\vert}}

% END: Math

\usepackage{hyperref}

\author[$\dagger$]{Teng Andrea Xu}
\author[$\ddagger$]{Bryan Kelly}
\author[$\dagger$]{Semyon Malamud}

\affil[$\dagger$]{Swiss Finance Institute, EPFL \authorcr \texttt{andrea.xu,semyon.malamud@epfl.ch}}
\affil[$\ddagger$]{Yale School of Management, Yale University \authorcr \texttt{bryan.kelly@yale.edu }} 
\begin{document}

\title{A Simple Algorithm For Scaling Up Kernel Methods}

%
%\author{$\dagger$Teng Andrea Xu, Bryan Kelly, Semyon Malamud\thanks{Teng Andrea Xu is at EPFL. Bryan Kelly is at Yale School of Management, AQR Capital Management, and NBER; www.bryankellyacademic.org. Semyon Malamud is at Swiss Finance Institute, EPFL and CEPR. Teng Andrea Xu gratefully acknowledges financial support of the Swiss National Science Foundation, Grant 100018\_192692. Emails: \texttt{andrea.xu@epfl.ch, bryan.kelly@yale.edu, semyon.malamud@epfl.ch }
%}}

%
\date{}

\maketitle

\begin{abstract}
 The recent discovery of the equivalence between infinitely wide neural networks (NNs) in the lazy training regime and Neural Tangent Kernels (NTKs) \cite{jacot2018neural} has revived interest in kernel methods. However, conventional wisdom suggests kernel methods are unsuitable for large samples due to their computational complexity and memory requirements. 
We introduce a novel random feature regression algorithm that allows us (when necessary) to scale to virtually \textit{infinite} numbers of random features. We illustrate the performance of our method on the CIFAR-10 dataset.
\end{abstract}
\noindent

\bigskip\bigskip\bigskip

\noindent
%\footnotesize {\bf Keywords:}\ 

\thispagestyle{empty}

\renewcommand{\thefootnote}{\number\value{footnote}}

\pagenumbering{arabic}
\def\baselinestretch{1.617}\small\normalsize%

\clearpage

\section{Introduction}
\label{sec:introduction}

Modern neural networks operate in the over-parametrized regime, which sometimes requires orders of magnitude more parameters than training data points. Effectively, they are {\it interpolators} (see, \cite{belkin2021fit}) and overfit the data in the training sample, with no consequences for the out-of-sample performance. This seemingly counterintuitive phenomenon is sometimes called ``benign overfit'' \citep{bartlett2020benign, tsigler2020benign}.  

In the so-called lazy training regime \cite{chizat2019lazy}, wide neural networks (many nodes in each layer) are effectively kernel regressions, and ``early stopping'' commonly used in neural network training is closely related to ridge regularization \citep{ali2019continuous}. See, 
\cite{jacot2018neural,hastie2019surprises,du2018gradient,du2019gradient,allen2019convergence}.  Recent research also emphasizes the ``double descent," in which expected forecast error drops in the high-complexity regime. See, for example,  \cite{zhang2016understanding,belkin2018reconciling,belkin2019does, spigler2019jamming, belkin2020two}.

These discoveries made many researchers argue that we need to gain a deeper understanding of kernel methods (and, hence, random feature regressions) and their link to deep learning. See, e.g., \cite{belkin2018understand}. Several recent papers have developed numerical algorithms for scaling kernel-type methods to large datasets and large numbers of random features. See, e.g., \cite{NEURIPS2021_08ae6a26, ma2017diving,arora2019exact, shankar2020neural}. In particular, \cite{arora2019harnessing} show how NTK combined with the support vector machines (SVM) (see also \cite{fernandez2014we}) perform well on small data tasks relative to many competitors, including the highly over-parametrized ResNet-34. In particular, while modern deep neural networks do generalize on small datasets (see, e.g., \cite{olson2018modern}), \cite{arora2019harnessing} show that kernel-based methods achieve superior performance in such small data environments. Similarly, \cite{du2019graph} find that the graph neural tangent kernel (GNTK) dominates graph neural networks on datasets with up to 5000 samples. \cite{shankar2020neural} show that, while NTK is a powerful kernel, it is possible to build other classes of kernels (they call Neural Kernels) that are even more powerful and are often at par with extremely complex deep neural networks.

In this paper, we develop a novel form of kernel ridge regression that can be applied to any kernel and any way of generating random features. We use a doubly stochastic method similar to that in \cite{dai2014scalable}, with an important caveat: We generate (potentially large,  defined by the RAM constraints) batches of random features and then use linear algebraic properties of covariance matrices to recursively update the eigenvalue decomposition of the feature covariance matrix, allowing us to perform the optimization in one shot across a large grid of ridge parameters. 

The paper is organized as follows. Section \ref{rel-work} discusses related work. In Section \ref{sec:rf_regression}, we provide a novel random feature regression mathematical formulation and algorithm. Then, Section \ref{sec:numerical_results} and Section \ref{sec:conclusion} present numerical results and conclusions, respectively.

\section{Related Work}\label{rel-work}

Before the formal introduction of the NTK in \cite{jacot2018neural}, numerous papers discussed the intriguing connections between infinitely wide neural networks and kernel methods. See, e.g., \cite{neal1996priors}; \cite{williams1997computing}; \cite{le2007continuous}; \cite{hazan2015steps}; \cite{lee2018deep}; \cite{matthews2018gaussian}; \cite{novak2018bayesian}; \cite{garriga2018deep}; \cite{cho2009kernel}; \cite{daniely2016toward}; \cite{daniely2017sgd}. As in the standard random feature approximation of the kernel ridge regression (see \cite{rahimi2007random}), only the network's last layer is trained in the standard kernel ridge regression. A surprising discovery of \cite{jacot2018neural}  
is that (infinitely) wide neural networks in the lazy training regime converge to a kernel even though all network layers are trained. The corresponding kernel, the NTK, has a complex structure dependent on the neural network's architecture. See also \cite{lee2019wide}, \cite{arora2019exact} for more results about the link between NTK and the underlying neural network, and \cite{novak2019neural} for an efficient algorithm for implementing the NTK. In a recent paper, \cite{shankar2020neural} introduce a new class of kernels and show that they perform remarkably well on even very large datasets, achieving a 90\% accuracy on the CIFAR-10 dataset. While this performance is striking, it comes at a huge computational cost. \cite{shankar2020neural} write: 

{\it ``CIFAR-10/CIFAR-100 consist of $60,000$ $32\times 32\times 3$ images and MNIST consists of $70,000$ $28\times 28$ images. Even with this constraint, the largest compositional 
kernel matrices we study took approximately 1000 GPU hours to compute. Thus, we believe an imperative direction of future work is reducing the complexity of each kernel evaluation. Random feature methods or other compression schemes could play a significant role here.} 

In this paper, we offer one such highly scalable scheme based on random features. However, computing the random features underlying the Neural Kernels of \cite{shankar2020neural} would require developing non-trivial numerical algorithms based on the recursive iteration of non-linear functions. We leave this as an important direction for future research. 

As in standard kernel ridge regressions, we train our random feature regression on the full sample. This is a key computational limitation for large datasets. After all, one of the reasons for the success of modern deep learning is the possibility of training them using stochastic gradient descent on mini-batches of data. \citet{ma2017diving} shows how mini-batch training can be applied to kernel ridge regression. A key technical difficulty arises because kernel matrices (equivalently, covariance matrices of random features) have eigenvalues that decay very quickly. Yet, these low eigenvalues contain essential information and cannot be neglected. Our regression method can be easily modified to allow for mini-batches.
Furthermore, it is known that mini-batch linear regression can even lead to performance gains in the high-complexity regime. As \cite{lejeune2020implicit} show, one can run regression on mini-batches and then treat the obtained predictions as an ensemble. \cite{lejeune2020implicit} prove that, under technical conditions, the average of these predictions attains a lower generalization error than the full-train-sample-based regression. 
We test this mini-batch ensemble approach using our method and show that, indeed, with moderately-sized mini-batches, the method's performance matches that of the full sample regression.

Moreover, there is an intriguing connection between mini-batch regressions and spectral dimensionality reduction. By construction, the feature covariance matrix with a mini-batch of size $B$ has at most $B$ non-zero eigenvalues. Thus, a mini-batch effectively performs a dimensionality reduction on the covariance matrix. Intuitively, we expect that the two methods (using a mini-batch of size $B$ or using the full sample but only keeping $B$ largest eigenvalues) should achieve comparable performance. We show that this is indeed the case for small sample sizes. However, the spectral method for larger-sized samples ($N \geq 10000$) is superior to the mini-batch method unless we use very large mini-batches. For example, on the full CIFAR-10 dataset, the spectral method outperforms the mini-batch approach by 3\% (see Section~\ref{sec:numerical_results} for details).

%we need mini-batches of size 20000 to achieve the best performance (see Appendix \ref{sec:additional_results} for details).   

\section{Random Features Ridge Regression and Classification} 
\label{sec:rf_regression}
Suppose that we have a train sample $(X,y)\ =\ (x_i,y_i)_{i=1}^N,\ x_i\in \R^d,\ y_i\in \R,$ so that $X\in \R^{N\times d},\ y\in \R^{N\times 1}.$  Following \cite{rahimi2007random} we construct a large number of random features $f(x;\theta_p),\ p=1,\ldots,P,$ where $f$ is a non-linear function and $\theta_p$ are sampled from some distribution, and $P$ is a large number. We denote $S=f(X;\theta)\in \R^{N\times P}$ as the train sample realizations of random features. Following \cite{rahimi2007random}, we consider the random features ridge regression,
\begin{equation}
\beta(z)\ =\ (S^{\top}S/N+zI)^{-1}S^{\top}y/N\,, 
\end{equation}
as an approximation for kernel ridge regression when $P\to\infty$. For classification problems, it is common to use categorical cross-entropy as the objective. However, as \citet{belkin2021fit} explains, minimizing the mean-squared error with one-hot encoding often achieves superior generalization performance. Here, we follow this approach. Given the $K$ labels, $k=1,\ldots,K,$ we build the one-hot encoding matrix $Q=(q_{i,k})$ where $q_{i,k}={\bf 1}_{y_i=k}.$ Then, we get 
\begin{equation}
\beta(z)\ =\ (S^{\top}S/N+zI)^{-1}S^{\top}Q/N\ \in\ \R^{P\times K}\,.
\end{equation}
Then, for each test feature vector ${\bf s}=f(\bx;\theta) \in \R^P,$ we get a vector $\beta(z)^\top {\bf s} \in \R^{K}$. Next, define the actual classifier as
\begin{equation}
k(\bx;z)\ =\ \arg\max\{\beta(z)^\top {\bf s} \}\ \in\ \{1,\cdots,K\}\,. 
\end{equation}

\subsection{Dealing with High-Dimensional Features} 
\label{sec:high_dimensional}

A key computational (hardware) limitation of kernel methods comes from the fact that, when $P$ is large, computing the matrix $S^{\top}S\in \R^{P\times P}$ becomes prohibitively expensive, in particular, because $S$ cannot even be stored in RAM. We start with a simple observation that the following identity implies that storing all these features is not necessary:\footnote{This identity follows directly from $(S^{\top}S/N+zI)S^{\top}=S^{\top}(SS^{\top}/N+zI).$} 
\begin{equation}
(S^{\top}S/N+zI)^{-1}S^{\top}\ =\ S^{\top}(SS^{\top}/N+zI)^{-1}\,,
\end{equation}
and therefore we can compute $\beta(z)$ as 
\begin{equation}
\beta(z)\ =\ S^{\top}(SS^{\top}/N+zI)^{-1}y/N\,. 
\end{equation}

Suppose now we split $S$ into multiple blocks, $S_1,\ldots,S_K,$ where $S_k\in \R^{N\times P_1}$ for all $k=1, \ldots, K$, for some small $P_1,$ with $KP_1=P.$ Then, 
\begin{equation}
\Psi = SS^{\top}\ =\ \sum_{k=1}^K S_kS_k^{\top}
\end{equation}
can be computed by generating the blocks $S_k,$ one at a time, and recursively adding $S_kS_k^{\top}$ up. Once $\Psi$ has been computed, one can calculate its eigenvalue decomposition, $\Psi=VDV^{\top},$ and then evaluate $Q(z)=(\Psi/N+zI)^{-1}y/N\ =\ V(D+zI)^{-1}V^{\top}y/N\in \R^N$ in one go for a grid of $z.$ Then, using the same seeds, we can again generate the random features $S_k$ and compute $\beta_k(z)=S_k^{\top}Q(z)\in \R^{P_1}.$ Then, $\beta(z)=(\beta_k(z))_{k=1}^K\in \R^P\,.$ The logic described above is formalized in Algorithm \ref{alg:giant_regression}.

\begin{algorithm}
\caption{FABReg}
\label{alg:giant_regression}
\begin{algorithmic}
\Require $P_1$, $P$, $X \in \R^{N\times d}$, $y \in \R^{N}$, $z$, $voc\_curve$ 
\State $blocks \gets P // P_1$
\State $k \gets 0$
\State $\Psi \gets 0_{N \times N}$
\While{$k < blocks$}
    \State Generate $S_{k} \in \R^{N\times P_1}$ {Use $k$ as seed}
    
    \State $\Psi \gets \Psi + S_kS_k\top$
    \If{$k$ in $voc\_curve$}
        \State $DV \gets eigen(\frac{\Psi}{N})$
        \State $Q_k(z) \gets V (D + zI)^{-1} V^\top \frac{y}{N}$ \Comment{Store $Q_k(z)$}
 
    \EndIf
    \State $k = k + 1$
\EndWhile
\State $DV \gets eigen(\frac{\Psi}{N})$
\State $Q(z) \gets V (D + zI)^{-1} V^\top \frac{y}{N}$
\State $k \gets 0$
\While{$k < blocks$}
    \State (re-)Generate $S_{k} \in \R^{N\times P_1}$ \Comment{Use $k$ as seed}
    \State $\beta_k(z) \gets S^{\top}_kQ(z)$ 
    \State $\hat{y} \mathrel{+}=  S_k\beta_k$

\EndWhile
\end{algorithmic}
\end{algorithm}

\subsection{Dealing with Massive Datasets}
 
The above algorithm relies crucially on the assumption that $N$ is small. 
Suppose now that the sample size $N$ is so large that storing and eigen-decomposing the matrix $SS^{\top}\in \R^{N\times N}$ becomes prohibitively expensive. In this case, we proceed as follows. 

Define for all $k = 1, \ldots, K$
\begin{equation}
\Psi_k\ =\  \sum_{\kappa=1}^k S_kS_k^{\top}\in \R^{N\times N},\ \Psi_0=0_{N\times N}\,,
\end{equation}
and let $\gl_1(A)\ge\cdots\ge \gl_N(A)$ be the eigenvalues of a symmetric matrix $A\in \R^{N\times N}.$ Our goal is to design an approximation to $(\Psi_K+zI)^{-1},$ based on a simple observation that the eigenvalues of the empirically observed $\Psi_k$ matrices tend to decay very quickly, with only a few hundreds of largest eigenvalues being significantly different from zero. In this case, we can fix a $\nu\in \mathbb N$ and design a simple, rank$-\nu$  approximation to $\Psi_K$ by annihilating all eigenvalues below $\gl_\nu(\Psi_K).$  As we now show, it is possible to design a recursive algorithm for constructing such an approximation to $\Psi_K,$ dealing with small subsets of random features simultaneously. To this end, we proceed as follows. 

Suppose we have constructed an approximation $\hat\Psi_k\in \R^{N\times N}$ to $\Psi_k$ with rank $\nu,$ and let $V_k\in \R^{N\times\nu}$ be the corresponding matrix of orthogonal eigenvectors for the non-zero eigenvalues, and $D_k\in \R^{\nu\times \nu}$ the diagonal  matrix of eigenvalues so that $\hat\Psi_k=V_kD_kV_k^{\top}$ and $V_k^{\top}V_k=I_{\nu\times\nu}$. Instead of storing the full $\hat\Psi_k$ matrix, we only need to store the pair $(V_k, D_k).$ For all $k=1,\ldots,K$, we now define 
\begin{equation}
\tilde\Psi_{k+1}\ =\ \hat\Psi_k\ +\ S_{k+1}S_{k+1}^{\top}\,.
\end{equation}
This $N\times N$ matrix is a theoretical construct. We never actually compute it (see Algorithm \ref{alg:spectral}).
Let $\Theta_k=I-V_kV_k^{\top}$ be the orthogonal projection on the kernel of $\hat\Psi_k,$ and 
\begin{equation}
\tilde S_{k+1}\ =\ \Theta_k S_{k+1}\ =\ S_{k+1}-\underbrace{V_k}_{N\times\nu}(\underbrace{V_k^{\top}S_{k+1}}_{\nu\times P_1})
\end{equation}
be $S_{k+1}$ orthogonalized with respect to the columns of $V_k.$ Then, we define $\tilde W_{k+1}=\tilde S_{k+1} (\tilde S_{k+1}\tilde S_{k+1}^{\top})^{-1/2}$ to be the orthogonalized columns of $\tilde S_{k+1},$ and $\hat V_{k+1}=[V_k, \tilde W_{k+1}]$. To compute $\tilde S_{k+1} (\tilde S_{k+1}\tilde S_{k+1}^{\top})^{-1/2},$ we use the following lemma that, once again, uses smart eigenvalue decomposition techniques to avoid dealing with the $N\times N$ matrix $\tilde S_{k+1}\tilde S_{k+1}^{\top}$.

\begin{lemma}\label{spec-trans} Let $\underbrace{\tilde S_{k+1}^\top\tilde S_{k+1}}_{\nu\times \nu}=W \gd \tilde W^\top$ be the eigenvalue decomposition of $\tilde S_{k+1}^\top\tilde S_{k+1}.$ Then, $\tilde W = \tilde S_{k+1}W \gd^{-1/2}$ is the matrix of eigenvectors of $\tilde S_{k+1}\tilde S_{k+1}^\top$ for the non-zero eigenvalues. Thus, 
\begin{equation}
\tilde S_{k+1} (\tilde S_{k+1}\tilde S_{k+1}^{\top})^{-1/2}\ =\ \tilde W_{k+1}\,. 
\end{equation}
\end{lemma}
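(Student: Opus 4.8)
This is an instance of the classical duality between the two Gram matrices of a rectangular matrix, so I would prove it by direct substitution. Write $A:=\tilde S_{k+1}\in\R^{N\times P_1}$ and let $A^\top A=W\gd W^\top$ be the symmetric eigendecomposition of the small matrix $A^\top A$, with $W^\top W=I$. At the outset I discard the zero eigenvalues: keep only those columns of $W$ whose corresponding diagonal entry of $\gd$ is strictly positive, so that from now on $\gd$ is a positive diagonal matrix and $\gd^{-1/2}$ is a genuine inverse (this is also what makes $(\tilde S_{k+1}\tilde S_{k+1}^{\top})^{-1/2}$ meaningful, read as the pseudo-inverse of the principal square root, inverting only on the range --- necessary because $\Theta_k$ may make $\tilde S_{k+1}$ column-rank-deficient). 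Put $\tilde W:=AW\gd^{-1/2}$; this is the matrix the lemma denotes $\tilde W=\tilde W_{k+1}$.

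Then there are just two computations. First, $\tilde W$ has orthonormal columns:
\[
\tilde W^\top\tilde W\;=\;\gd^{-1/2}W^\top(A^\top A)W\gd^{-1/2}\;=\;\gd^{-1/2}W^\top(W\gd W^\top)W\gd^{-1/2}\;=\;\gd^{-1/2}\gd\,\gd^{-1/2}\;=\;I,
\]
using only $W^\top W=I$. Second, the columns of $\tilde W$ are eigenvectors of $AA^\top=\tilde S_{k+1}\tilde S_{k+1}^{\top}$, by associativity alone:
\[
(AA^\top)\tilde W\;=\;A(A^\top A)W\gd^{-1/2}\;=\;A(W\gd W^\top)W\gd^{-1/2}\;=\;AW\gd^{1/2}\;=\;(AW\gd^{-1/2})\gd\;=\;\tilde W\gd.
\]
Since $AA^\top$ and $A^\top A$ have the same nonzero eigenvalues with the same multiplicities, this says $\tilde W$ is exactly an orthonormal eigenbasis of $\tilde S_{k+1}\tilde S_{k+1}^{\top}$ for its nonzero eigenvalues, i.e. $\tilde S_{k+1}\tilde S_{k+1}^{\top}=\tilde W\gd\tilde W^\top$ on its range.

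For the displayed identity I would substitute this back into the inverse square root: $(\tilde S_{k+1}\tilde S_{k+1}^{\top})^{-1/2}=\tilde W\gd^{-1/2}\tilde W^\top$, and note the one-line consequence $\tilde W^\top\tilde S_{k+1}=\gd^{1/2}W^\top$ of the definition of $\tilde W$; combining these collapses $\tilde S_{k+1}(\tilde S_{k+1}\tilde S_{k+1}^{\top})^{-1/2}$ (equivalently $\tilde S_{k+1}(\tilde S_{k+1}^\top\tilde S_{k+1})^{-1/2}$, the polar factor of $\tilde S_{k+1}$) to $\tilde W$, i.e. to the promised orthonormal basis of $\mathrm{col}(\tilde S_{k+1})$. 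I do not expect a genuine obstacle in any of this: the whole content is the two substitutions above. The only point that deserves a precise statement --- and the closest thing to a ``hard part'' --- is the bookkeeping around rank-deficiency: restricting to the positive part of $\gd$ throughout and interpreting the $(\cdot)^{-1/2}$ of the singular Gram matrices as pseudo-inverses, together with the (standard) fact that $\tilde S_{k+1}^\top\tilde S_{k+1}$ and $\tilde S_{k+1}\tilde S_{k+1}^\top$ share their nonzero spectrum.
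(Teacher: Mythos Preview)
Your proof is correct and is the standard argument; there is nothing to compare it to, because the paper does not actually supply a proof of this lemma (the appendix proves only Lemma~\ref{bound}). The two substitutions you give --- $\tilde W^\top\tilde W=I$ and $(AA^\top)\tilde W=\tilde W\gd$ --- are exactly the right content, and your care about restricting to the positive part of $\gd$ and reading $(\cdot)^{-1/2}$ as a pseudo-inverse is appropriate, since $\tilde S_{k+1}=\Theta_k S_{k+1}$ can indeed be rank-deficient.

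One small caveat worth flagging explicitly in your write-up: the displayed ``equality'' $\tilde S_{k+1}(\tilde S_{k+1}\tilde S_{k+1}^{\top})^{-1/2}=\tilde W_{k+1}$ in the lemma is not a literal matrix identity. Reading the left side as the polar factor $A(A^\top A)^{-1/2}$ (the only dimensionally consistent interpretation), one gets $AW\gd^{-1/2}W^\top=\tilde W W^\top$, which differs from $\tilde W=AW\gd^{-1/2}$ by the right orthogonal factor $W^\top$ (and has $P_1$ columns rather than $r=\mathrm{rank}(A)$). Both are orthonormal bases of $\mathrm{col}(\tilde S_{k+1})$, which is all the algorithm needs; indeed Algorithm~\ref{alg:spectral} computes $\tilde W_k=\tilde S_k W_k\gd_k^{-1/2}$, i.e.\ your $\tilde W$, not the polar factor. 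So your proof establishes exactly what is used downstream; the ``Thus'' in the lemma should be read as ``serves as'' rather than ``equals''.
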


By construction, the columns of $\hat V_{k+1}$ form an orthogonal basis of the span of the columns of $V_k,S_{k+1},$ and hence 
\begin{equation}
\Psi_{k+1,*}\ =\ \hat V_{k+1}^{\top}\tilde\Psi_{k+1}\hat V_{k+1}\ \in\ \R^{(P_1+\nu)\times (P_1+\nu)}
\end{equation}
has the same non-zero eigenvalues as $\tilde\Psi_{k+1}.$ We then define $\tilde V_{k+1}\in\R^{(P_1+\nu)\times\nu}$ to be the matrix with eigenvectors of $\Psi_{k+1,*}$ for the largest $\nu$ eigenvalues, and we denote the diagonal matrix of these eigenvalues by $D_{k+1}\in \R^{\nu\times\nu}$, and then we define $V_{k+1}\ =\ \hat V_{k+1}\tilde V_{k+1}\,.$
Then, $\hat\Psi_{k+1}\ =\ V_{k+1}D_{k+1}V_{k+1}\ =\ \Pi_{k+1}\tilde\Psi_{k+1}\Pi_{k+1}\,,$
where $\Pi_{k+1}\ =\ \hat V_{k+1}\tilde V_{k+1}\tilde V_{k+1}^{\top}\hat V_{k+1}^{\top}$
is the orthogonal projection onto the eigen-subspace of $\tilde\Psi_{k+1}$ for the largest $\nu$ eigenvalues. 

\begin{lemma}\label{bound} We have $\hat\Psi_k\le\tilde\Psi_k\le \Psi_K$ and 
\begin{equation}
\|\Psi_{k}-\hat\Psi_{k}\|\ \le\ \sum_{i=1}^k\gl_{\nu+1}(\Psi_i)\ \le\ k\,\gl_{\nu+1}(\Psi_K)\,,
\end{equation}
and 
\begin{equation}
\|(\Psi_{k+1}+zI)^{-1}-(\hat\Psi_{k+1}+zI)^{-1}\|\ \le\ z^{-2}\sum_{i=1}^k\gl_{\nu+1}(\Psi_i)\,. 
\end{equation}
\end{lemma}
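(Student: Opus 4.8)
The plan is to prove the three assertions in the order stated, by induction on $k$, using only two ingredients: the Loewner order $\le$ on symmetric matrices together with its compatibility with eigenvalues (Weyl monotonicity: $A\le B\Rightarrow \lambda_j(A)\le\lambda_j(B)$ for every $j$), and the structural fact — already established in the construction preceding the lemma — that $\Pi_{k+1}$ is the \emph{spectral} projector of $\tilde\Psi_{k+1}$ onto its top-$\nu$ eigenspace, so that $\Pi_{k+1}$ and $\tilde\Psi_{k+1}$ commute and $\hat\Psi_{k+1}=\Pi_{k+1}\tilde\Psi_{k+1}\Pi_{k+1}=\tilde\Psi_{k+1}\Pi_{k+1}$.

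\emph{Ordering $\hat\Psi_k\le\tilde\Psi_k\le\Psi_K$.} Induct on $k$ with base case $\hat\Psi_0=0\le\Psi_K$ (each $S_\kappa S_\kappa^{\top}\succeq 0$). Assuming $\hat\Psi_k\le\Psi_k$, add the PSD matrix $S_{k+1}S_{k+1}^{\top}$ to both sides: $\tilde\Psi_{k+1}=\hat\Psi_k+S_{k+1}S_{k+1}^{\top}\le\Psi_k+S_{k+1}S_{k+1}^{\top}=\Psi_{k+1}\le\Psi_K$, the last step because $\Psi_{k+1}\le\Psi_K$ for $k+1\le K$ (adding more PSD blocks). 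For the left inequality, $\tilde\Psi_{k+1}-\hat\Psi_{k+1}=\tilde\Psi_{k+1}(I-\Pi_{k+1})\succeq 0$: in the eigenbasis of $\tilde\Psi_{k+1}$ this operator is diagonal with the nonnegative tail eigenvalues $\lambda_{\nu+1}(\tilde\Psi_{k+1}),\lambda_{\nu+2}(\tilde\Psi_{k+1}),\dots$ on the complement of the top-$\nu$ eigenspace and zeros on it. Hence $\hat\Psi_{k+1}\le\tilde\Psi_{k+1}\le\Psi_{k+1}$, closing the induction (the inductive hypothesis $\hat\Psi_{k+1}\le\Psi_{k+1}$ is exactly what is needed for the next step).

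\emph{Error bound.} The key observation is that the block $S_k S_k^{\top}$ added in passing from $\Psi_{k-1}$ to $\Psi_k$ is precisely the block added in passing from $\hat\Psi_{k-1}$ to $\tilde\Psi_k$, so it cancels: $\Psi_k-\tilde\Psi_k=\Psi_{k-1}-\hat\Psi_{k-1}$. Therefore $\Psi_k-\hat\Psi_k=(\Psi_{k-1}-\hat\Psi_{k-1})+\tilde\Psi_k(I-\Pi_k)$, and taking operator norms the error propagates \emph{additively}: $\|\Psi_k-\hat\Psi_k\|\le\|\Psi_{k-1}-\hat\Psi_{k-1}\|+\lambda_{\nu+1}(\tilde\Psi_k)$. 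Since $\tilde\Psi_k\le\Psi_k$ from the first part, Weyl monotonicity gives $\lambda_{\nu+1}(\tilde\Psi_k)\le\lambda_{\nu+1}(\Psi_k)$; unrolling from $\|\Psi_0-\hat\Psi_0\|=0$ yields $\|\Psi_k-\hat\Psi_k\|\le\sum_{i=1}^k\lambda_{\nu+1}(\Psi_i)$, and $\Psi_i\le\Psi_K$ for $i\le K$ gives the further bound $k\,\lambda_{\nu+1}(\Psi_K)$. The resolvent estimate is then immediate from the identity $(\Psi_{k+1}+zI)^{-1}-(\hat\Psi_{k+1}+zI)^{-1}=(\Psi_{k+1}+zI)^{-1}(\hat\Psi_{k+1}-\Psi_{k+1})(\hat\Psi_{k+1}+zI)^{-1}$, submultiplicativity of the operator norm, and $\|(\Psi+zI)^{-1}\|\le z^{-1}$ for any PSD $\Psi$, combined with the error bound just proved.

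\emph{Main obstacle.} The estimates themselves are bookkeeping; the one point requiring care is the very first ingredient — that $\Pi_{k+1}$ is genuinely the spectral projector of $\tilde\Psi_{k+1}$ (equivalently, that passing to $\hat V_{k+1}$-coordinates in $\Psi_{k+1,*}=\hat V_{k+1}^{\top}\tilde\Psi_{k+1}\hat V_{k+1}$ loses none of the nonzero spectrum of $\tilde\Psi_{k+1}$, which holds because $\mathrm{range}(\tilde\Psi_{k+1})\subseteq\mathrm{span}(\hat V_{k+1})$). This is exactly what makes both $\hat\Psi_{k+1}\le\tilde\Psi_{k+1}$ and the clean identity $\|\tilde\Psi_{k+1}-\hat\Psi_{k+1}\|=\lambda_{\nu+1}(\tilde\Psi_{k+1})$ valid, and it is precisely the fact argued in the construction above, so it may be invoked directly. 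The only other subtlety is making sure the recursion accumulates errors additively rather than multiplicatively, which is secured by the exact cancellation $\Psi_k-\tilde\Psi_k=\Psi_{k-1}-\hat\Psi_{k-1}$ noted above.
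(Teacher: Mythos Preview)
Your proposal is correct and follows essentially the same route as the paper: the same cancellation $\Psi_{k+1}-\tilde\Psi_{k+1}=\Psi_k-\hat\Psi_k$, the same spectral-projection bound $\|\tilde\Psi_{k+1}-\hat\Psi_{k+1}\|\le\lambda_{\nu+1}(\tilde\Psi_{k+1})\le\lambda_{\nu+1}(\Psi_{k+1})$, the same induction, and the same resolvent identity for the last inequality. If anything you are more complete than the paper, since you actually prove the Loewner ordering $\hat\Psi_k\le\tilde\Psi_k\le\Psi_K$ (which the paper states in the lemma but does not argue in its proof) and you flag explicitly why $\Pi_{k+1}$ commuting with $\tilde\Psi_{k+1}$ is the crux.
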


There is another important aspect of our algorithm: It allows us to directly compute the performance of models with an expanding level of complexity. Indeed, since we load random features in batches of size $P_1,$ we generate predictions for $P\in [P_1,2P_1,\cdots, KP_1].$ This is useful because we might use it to calibrate the optimal degree of complexity and because we can directly study the double descent-like phenomena, see, e.g., \cite{belkin2018reconciling} and \cite{nakkiran2021deep}. That is the effect of complexity on the generalization error. In the next section, we do this. As we show, consistent with recent theoretical results \cite{kelly2022virtue}, with sufficient shrinkage, the double descent curve disappears, and the performance becomes almost monotonic in  complexity. Following \cite{kelly2022virtue}, we name this phenomenon {\it the virtue of complexity (VoC)} and the corresponding performance plots {\it the VoC curves.}  See, Figure \ref{fig:small_cifar_voc_curve} below. 

We call this algorithm Fast Annihilating Batch Regression (FABReg) as it annihilates all eigenvalues below $\lambda_{\nu}(\Psi_K)$ and allows to solve the random features ridge regression in one go for a grid of $z$. Algorithm~\ref{alg:spectral} formalizes the logic described above.

\begin{algorithm}
\caption{FABReg-$\nu$}
\label{alg:spectral}
\begin{algorithmic}
\Require $\nu$, $P_1$, $P$, $X \in \R^{N\times d}$, $y \in \R^{N}$, $z$, $voc\_curve$ 
\State $blocks \gets P // P_1$
\State $k \gets 0$
\While{$k < blocks$}
\State Generate $S_{k} \in \R^{N\times P_1}$ \Comment{Use $k$ as seed to generate the random features}
\If{$k = 0$} 
    \State $\tilde{d}, \tilde{V} \gets eigen(S^{\top}_{k} S_{k})$ 
    \State  $V \gets S_{k}\tilde{V}diag(\tilde{d})^{-\frac{1}{2}}$
    \State $V_0 \gets V_{:,min(\nu, P_1)}$ \Comment{Save $V_0$}
    \State $d_0 \gets \tilde{d}_{:min(\nu, P_1)}$ \Comment{Save $d_0$}
    \If{k in $voc\_curve$}
        \State $Q_0(z) \gets V_0 (diag(d_0) + zI)^{-1}V_0^{\top} y$ \Comment{Save $Q_0(z)$}
    \EndIf
\ElsIf{$k > 0$}
    \State $\tilde{S_k} \gets (I - V_{k-1}V^{\top}_{k-1})S_k$
    \State $\Gamma_k \gets \tilde{S^{\top}_k}\tilde{S_k}$
    \State $\delta_k, W_k \gets eigen(\Gamma_k)$
    \State Keep top min($\nu, P1$) eigenvalues and eigenvectors from $\delta_k, W_k$
    \State $\tilde{W_k} \gets \tilde{S_k} W_k diag(\delta_k)^{-\frac{1}{2}}$
    \State $\hat{V_k} \gets [V_{k-1}, \tilde{W_k}]$
    \State $\bar{V_k} \gets \hat{V^{\top}_k} V_{k-1}$
    \State $\bar{W_k} \gets \bar{V_k} diag(d_{k-1}) \bar{V^{\top}_k}$
    \State $\bar{S_k} \gets \hat{V^{\top}_k}S_k$
    \State $\bar{Z_k} \gets \bar{S_k}S^{\top}_k$
    \State $\Psi_{*} \gets \bar{W_k} \bar{Z_k}$
    \State $d_k, V_k \gets eigen(\Psi_{*})$
    \State Keep top min($\nu, P1$) eigenvalues and eigenvectors from $d_k, V_k$
    \State $V_k \gets \hat{V_k} V_k$ \Comment{Save $d_k, V_k$}
    \If{k in $voc\_curve$}
        \State $Q_k(z) \gets V_k (diag(d_k) + zI)^{-1}V_k^{\top} y$ \Comment{Save $Q_k(z)$}
    \EndIf
\EndIf
\State $k = k + 1$
\EndWhile

\State $k \gets 0$
\While{$k < blocks$}
    \State (re-)Generate $S_{k} \in \R^{N\times P_1}$ \Comment{Use $k$ as seed to generate the random features}
    \State $\beta_k(z) \gets S^{\top}_kQ_k(z)$ 
    \State $\hat{y} \mathrel{+}=  S_k\beta_k$

\EndWhile
\end{algorithmic}
\end{algorithm}

\section{Numerical Results}
\label{sec:numerical_results}
This section presents several experimental results on different datasets to evaluate FABReg's performance and applications. In contrast to the most recent computational power demand in kernel methods, e.g., \cite{shankar2020neural}, we ran all experiments on a laptop, a MacBook Pro model A2485, equipped with an M1 Max with a 10-core CPU and 32 GB RAM. 

%First, we show in Section \ref{sec:sklearn} a comparison between FABReg with {\it sklearn}'s RidgeClassifier. Then, in Section \ref{sec:small_dataset} and \ref{sec:big_datasets}, we show FABReg's accuracy on various settings and datasets.

\subsection{A comparison with sklearn}
\label{sec:sklearn}
We now aim to show FABReg's training and prediction time with respect to the number of features~$d$. To this end, we do not use any random feature projection or the rank-$\nu$ matrix approximation described in Section~\ref{sec:high_dimensional}. We draw $N = 5000$ i.i.d. samples from $\otimes_{j=1}^d\cN(0,1) $ and let
\begin{equation*}
    y_i =  x_i \beta  + \epsilon_i\quad \forall i = 1,\ldots, N,
\end{equation*}
where $\beta \sim \otimes_{j=1}^d \cN(0,1)$, and $\epsilon_i \sim \cN(0,1)$ for all~$i = 1,\ldots, N$. 
Then, we define
\begin{equation*}
y_i = 
\begin{cases} 
      1 & \text{if $y_i > \text{median}(\boldsymbol{y})$}, \\
      0 & \text{otherwise}
\end{cases}\quad \forall i= 1,\ldots, N.
\end{equation*}
Next, we create a set of datasets for classification with varying complexity $d$ and keep the first 4000 samples as the training set and the remaining 1000 as the test set. We show in Figure~\ref{fig:sklearn_comparison} the average training and prediction time (in seconds) of FABReg with a different number of regularizers~( we denote this number by $|z|$) and {\it sklearn} RidgeClassifier with an increasing number of features~$d$. The training and prediction time is averaged over five independent runs. As one can see, our method is drastically faster when $d > 10000$. E.g., for $d=100000$ we outperform {\it sklearn} by approximately 5 and 25 times for $|z|=5$ and  $|z|=50$, respectively. Moreover, one can notice that the number of different shrinkages $|z|$ does not affect FABReg. We report a more detailed table with average training and prediction time and standard deviation in Appendix \ref{sec:additional_results}.
\begin{figure}
    \centering
    \includegraphics[width=0.45\textwidth]{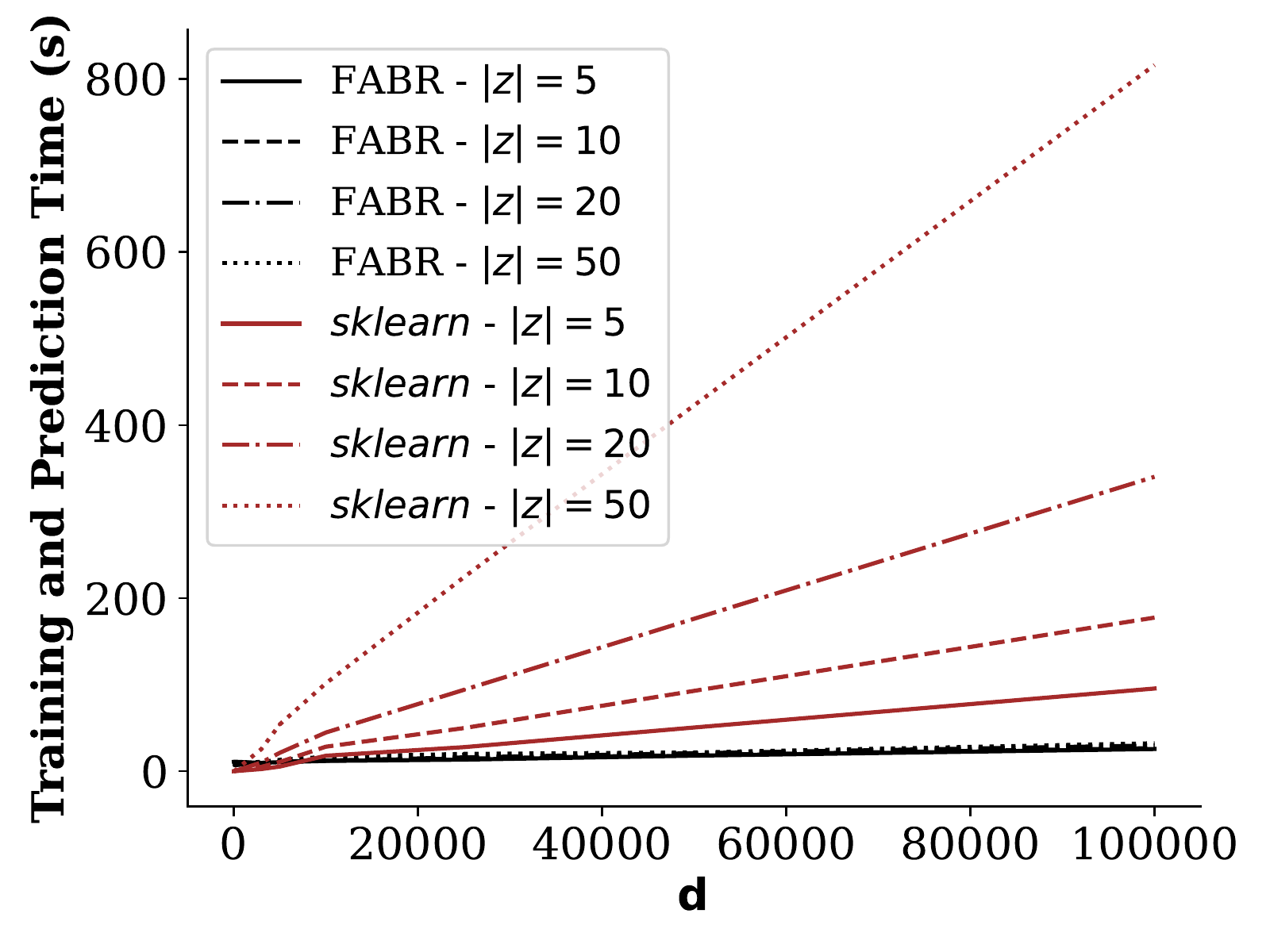}
    \caption{The figure above compares FABReg training and prediction time, shown on the y-axis, in black, against {\it sklearn}'s RidgeClassifier, in red, for an increasing amount of features, shown on the x-axis, and the number of shrinkages $z$. Here, $|z|$ denotes the number of different values of $z$ for which we perform the training.}
    \label{fig:sklearn_comparison}
\end{figure}

\subsection{Experiments on Real Datasets}    
We assess FABReg's performance on both small and big datasets regimes for further evaluation. For all experiments, we perform a random features kernel ridge regression for demeaned one-hot labels and solve the optimization problem using FABReg as described in Section~\ref{sec:rf_regression}. 

\subsubsection{Data Representation}
\begin{table*}[h]
    \captionsetup{justification=centering}
    \caption{The table below shows the average test accuracy and standard deviation of ResNet-34, CNTK, and FABR on the subsampled CIFAR-10 datasets. The test accuracy is average over twenty independent runs.}
    
    \centering
    \label{table:small_dataset_cifar}

\scalebox{0.6}{

\begin{tabular}{rcccccc}
%\caption{Performance of ResNet, 14-Layer CNTK, and RF on small CIFAR-10 Dataset.}
\toprule
   n &      ResNet-34 & 14-layer CNTK   &     z=1  &             z=100  &             z=10000 &           z=100000 \\
\midrule
  10 & 14.59\% $\pm$ 1.99\% & 15.33\% $\pm$ 2.43\% &  18.50\% $\pm$ 2.18\% &  \textbf{18.50\% $\pm$ 2.18\%} & 18.42\% $\pm$ 2.13\% & 18.13\% $\pm$ 2.01\% \\
  20 & 17.50\% $\pm$ 2.47\% & 18.79\% $\pm$ 2.13\% & 20.84\% $\pm$ 2.38\% & \textbf{20.85\% $\pm$ 2.38\%} & 20.78\% $\pm$ 2.35\% & 20.13\% $\pm$ 2.34\% \\
  40 & 19.52\% $\pm$ 1.39\% & 21.34\% $\pm$ 1.91\% & 25.09\% $\pm$ 1.76\% &  25.10\% $\pm$ 1.76\% & \textbf{25.14\% $\pm$ 1.75\%} & 24.41\% $\pm$ 1.88\% \\
  80 & 23.32\% $\pm$ 1.61\% & 25.48\% $\pm$ 1.91\% &  29.61\% $\pm$ 1.35\% &  29.60\% $\pm$ 1.35\% & \textbf{29.62\% $\pm$ 1.39\%} & 28.63\% $\pm$ 1.66\%\\
 160 & 28.30\% $\pm$ 1.38\% & 30.48\% $\pm$ 1.17\% & 34.86\% $\pm$ 1.12\% & 34.87\% $\pm$ 1.12\% & \textbf{35.02\% $\pm$ 1.11\%} & 33.54\% $\pm$ 1.24\%  \\
 320 & 33.15\% $\pm$ 1.20\% & 36.57\% $\pm$ 0.88\% & 40.46\% $\pm$ 0.73\% & 40.47\% $\pm$ 0.73\% & \textbf{40.66\% $\pm$ 0.72\%} & 39.34\% $\pm$ 0.72\% \\
 640 & 41.66\% $\pm$ 1.09\% & 42.63\% $\pm$ 0.68\% &45.68\% $\pm$ 0.71\% & 45.68\% $\pm$ 0.72\% & \textbf{46.17\% $\pm$ 0.68\%} & 44.91\% $\pm$ 0.72\%   \\
1280 & 49.14\% $\pm$ 1.31\% & 48.86\% $\pm$ 0.68\% & 50.30\% $\pm$ 0.57\% & 50.32\% $\pm$ 0.56\% & \textbf{51.05\% $\pm$ 0.54\%} & 49.74\% $\pm$ 0.42\%\\
\bottomrule
\end{tabular}
}
\end{table*}
\begin{figure}[t]
    \centering
    \includegraphics[scale=0.45]{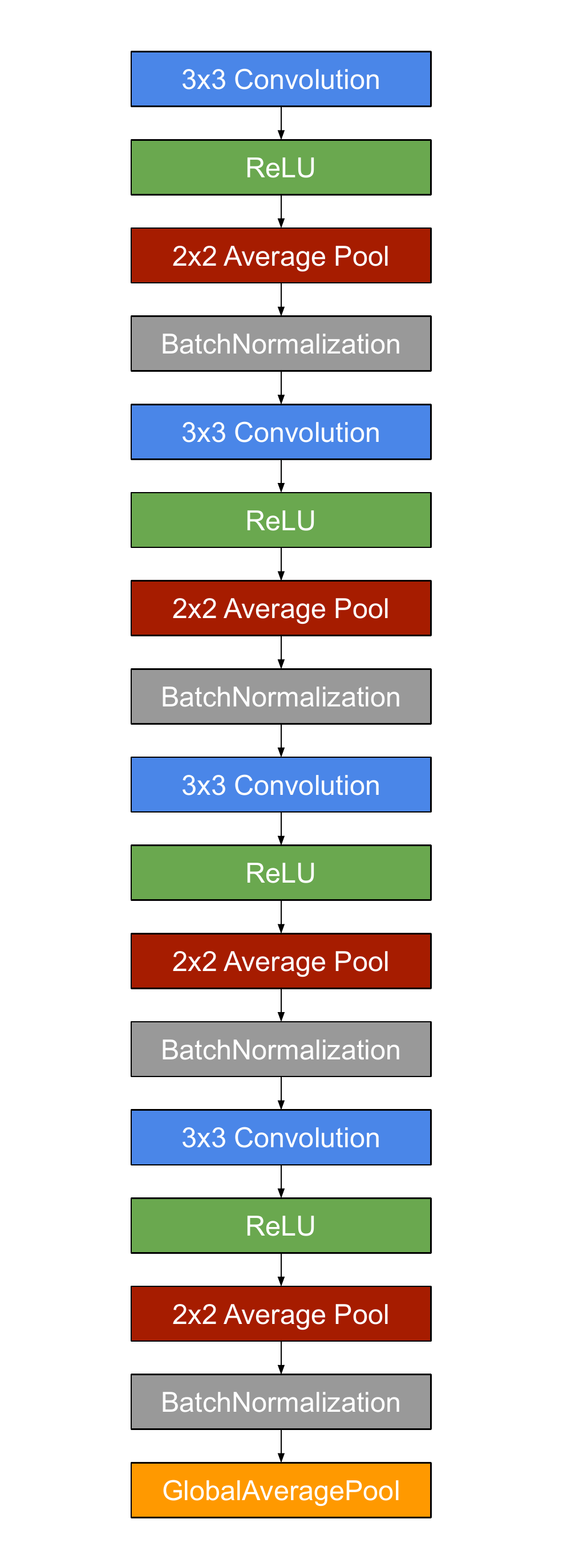}
    \caption{CNN architecture used to extract image features.}
    \label{fig:cnn_architecture}
\end{figure}
FABReg requires, like any standard kernel methods or randomized-feature techniques, a good data representation. Usually, we don't know such a representation {\it a-priori}, and learning a good kernel is outside the scope of this paper. Therefore, we build a simple Convolutional Neural Network (CNN) mapping $h: \R^d \to \R^D$; that extracts image features $\tilde{x} \in \R^{D}$ for some sample $x \in \R^{d}$. The CNN is not optimized; we use it as a simple random feature mapping. The CNN architecture, shown in Fig. \ref{fig:cnn_architecture}, alternates a $3 \times 3$ convolution layer with a {\it ReLU} activation function, a $2 \times 2$ Average Pool, and a BatchNormalization layer \cite{ioffe2015batch}. Convolutional layers weights are initialized using He Uniform \cite{he2015delving}. To vectorize images, we use a global average pooling layer that has proven to enforce correspondences between
feature maps and to be more robust to spatial translations of the input \cite{lin2013network}. We finally obtain the train and test random features realizations $s = f(\tilde{x}, \theta)$. Specifically, we use the following random features mapping
\begin{equation}
    s_i = \sigma(W\tilde{x}),
\end{equation}
where $W \in \R^{P \times D}$ with $w_{i,j} \sim \cN(0,1)$ and $\sigma$ is some elementwise activation function. This can be described as a one-layer neural network with random weights $W$. To show the importance of over-parametrized models, throughout the results, we report the {\it complexity}, $c$, of the model as $c = P/N$, that is, the ratio between the parameters (dimensions) and the number of observations. See \cite{belkin2018reconciling, hastie2019surprises, kelly2022virtue}. 

\subsubsection{Small Datasets}
\label{sec:small_dataset}

We now study the performance of FABReg on the subsampled CIFAR-10 dataset \cite{krizhevsky2009learning}. To this end, we reproduce the same experiment described in \cite{arora2019harnessing}. In particular, we obtain random subsampled training set $(y; X) = (y_i; x_i)_{i=1}^n$ where $n \in \{10, 20, 40, 80, 160, 320, 640, 1280\}$ and test on the {\it whole} test set of size 10000. We make sure that exactly $n/10$ sample from each image class is in the training sample. We train FABReg using random features projection of the subsampled training set
\begin{equation*}
    S = \sigma(W g(X)) \in \R^{n \times P},
\end{equation*}
where $g$ is an untrained CNN from Figure \ref{fig:cnn_architecture}, randomly initialized using He Uniform distribution. In this experiment, we push the model complexity $c$ to 100; in other words, FABReg's number of parameters equals a hundred times the number of observations in the subsample. As $n$ is small, we deliberately do not perform any low-rank covariance matrix approximation. Finally, we run our model twenty times and report the mean out-of-sample performance and the standard deviation. We report in Table~\ref{table:small_dataset_cifar} FABReg's performance for different shrinkages ($z$) together with ResNet-34 and the 14-layers CNTK. Without any complicated random feature projection, FABReg can outperform both ResNet-34 and CNTK. FABReg's test accuracy increases with the model's complexity $c$ on different ($n$) subsampled CIFAR-10 datasets. We show Figure~\ref{fig:single_full_curve} as an example for $n=10$. Additionally, we show, to better observe the double descent phenomena, truncated curves at $c=25$ for all CIFAR-10 subsamples in Figure~\ref{fig:small_cifar_voc_curve_zoom}. The full curves are shown in Appendix \ref{sec:additional_results}. To sum up this section findings:
\begin{itemize}
    \item FABReg, with enough complexity together and a simple random feature projection, is able to outperform deep neural networks (ResNet-34) and CNTKs.
    \item FABReg always reaches the maximum accuracy beyond the interpolation threshold. 
    \item Moreover, if the random feature ridge regression shrinkage $z$ is sufficiently high, the double descent phenomenon disappears, and the accuracy does not drop at the interpolation threshold point, i.e., when $c=1$ or $n=P$. Following \cite{kelly2022virtue}, we call this phenomenon {\it virtue of complexity} (VoC).
\end{itemize}

\begin{figure}[]
    \centering
    \includegraphics[width=0.45\linewidth]{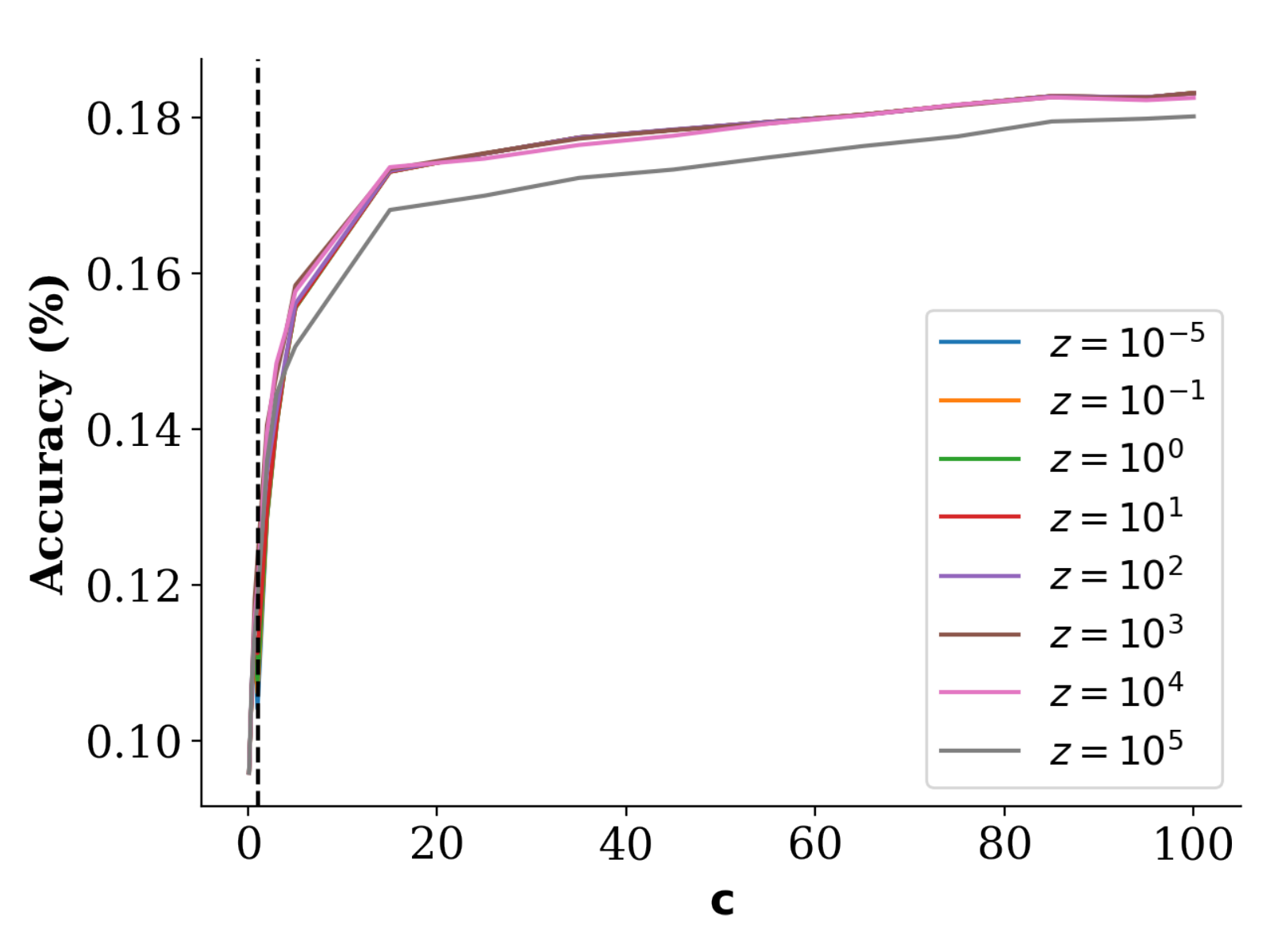}
    \caption{The figures above show FABReg's test accuracy increases with the model's complexity $c$ on the subsampled CIFAR-10 dataset for $n=10$. The test accuracy is averaged over five independent runs.}
    \label{fig:single_full_curve}
\end{figure}
 
\begin{figure*}[t]
\captionsetup[subfigure]{justification=centering}
    \centering
    \begin{subfigure}[b]{0.24\textwidth}
        \centering
        \includegraphics[width=\textwidth]{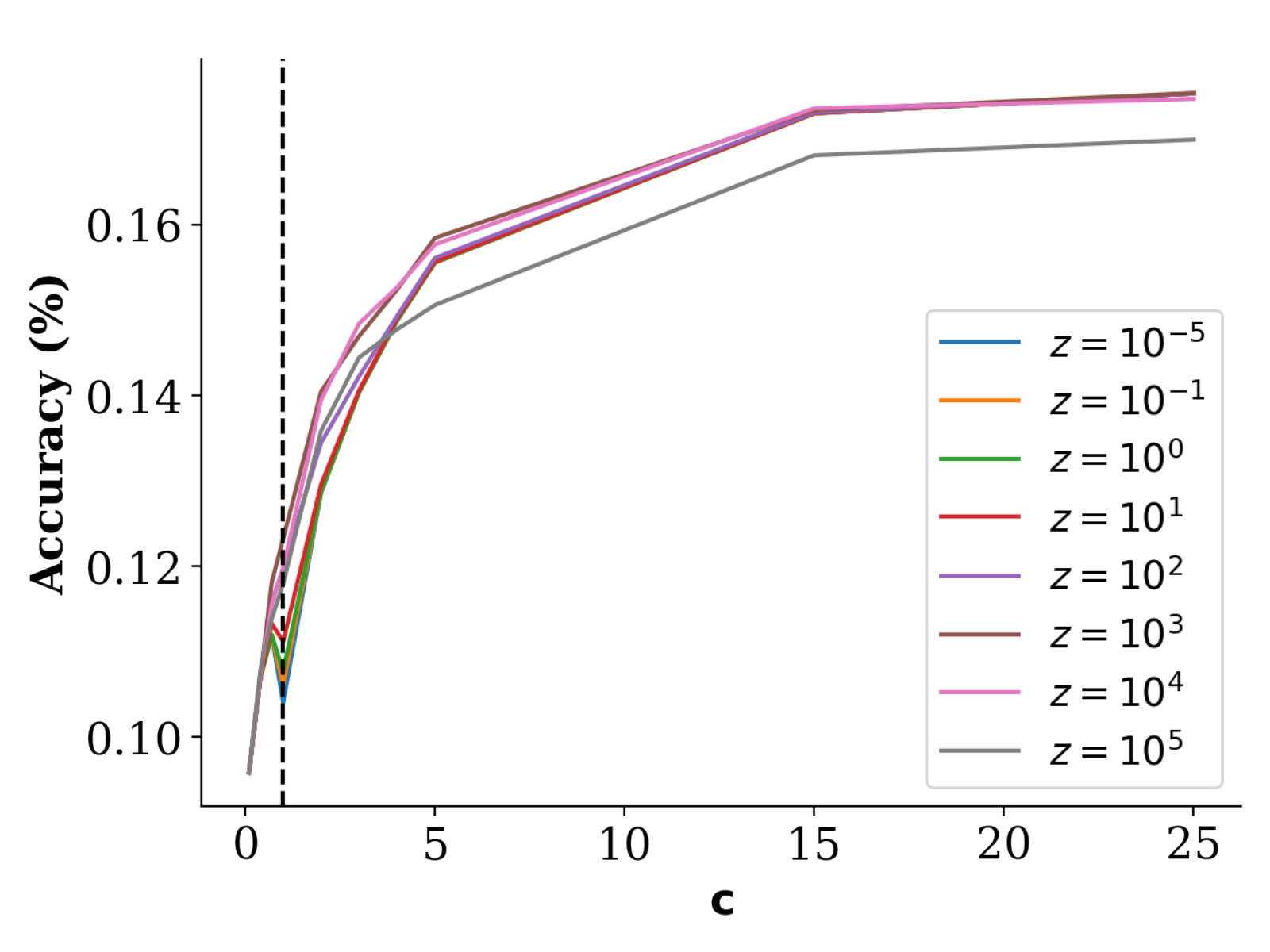}
        \caption{$n=10$}
        \label{fig:small_cifar_10_zoom}
    \end{subfigure}
    \hfill
    \begin{subfigure}[b]{0.24\textwidth}
        \centering
        \includegraphics[width=\textwidth]{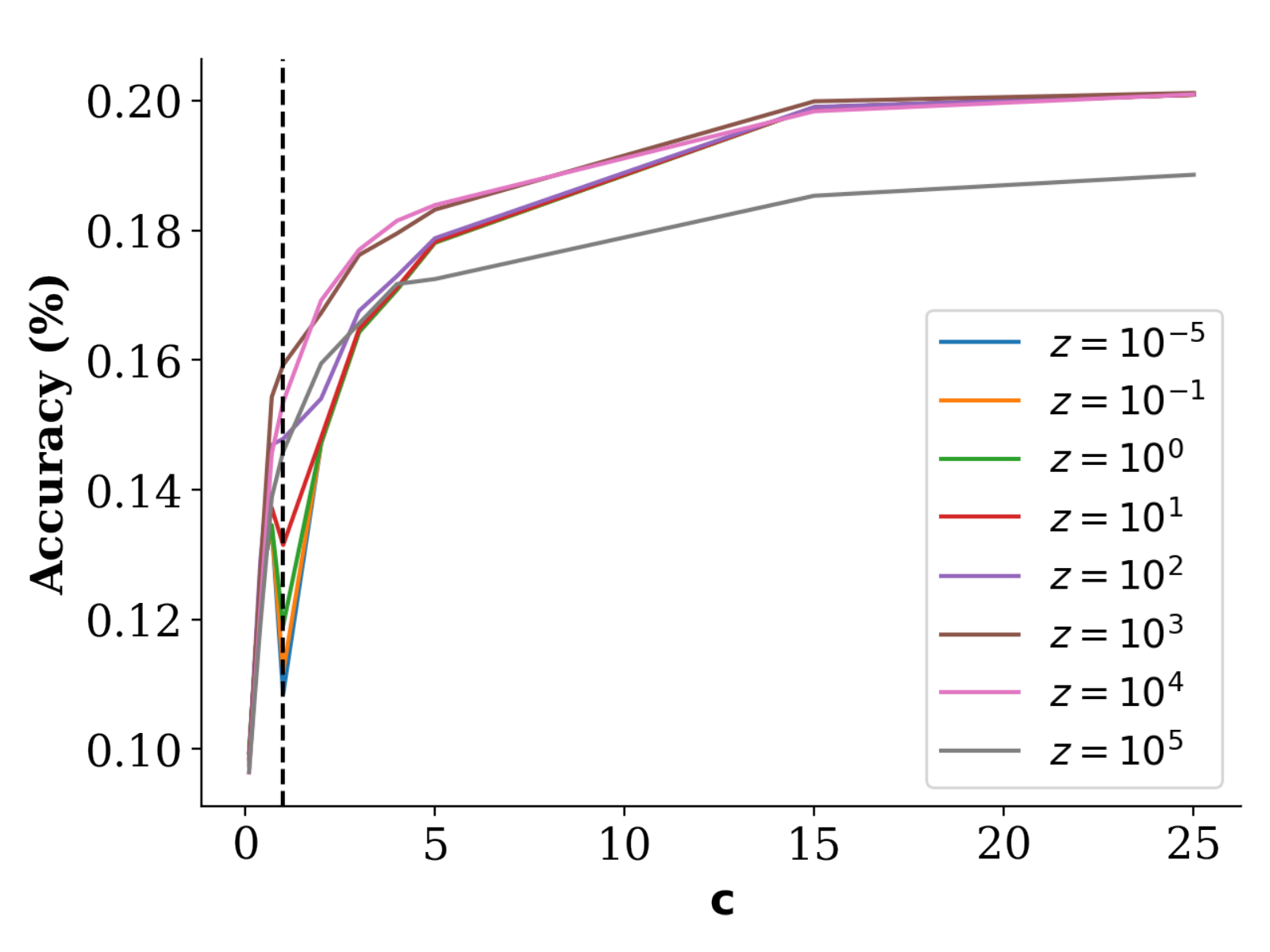}
        \caption{$n=20$}
        \label{fig:small_cifar_20_zoom}
    \end{subfigure}
    \hfill
    \begin{subfigure}[b]{0.24\textwidth}
        \centering
        \includegraphics[width=\textwidth]{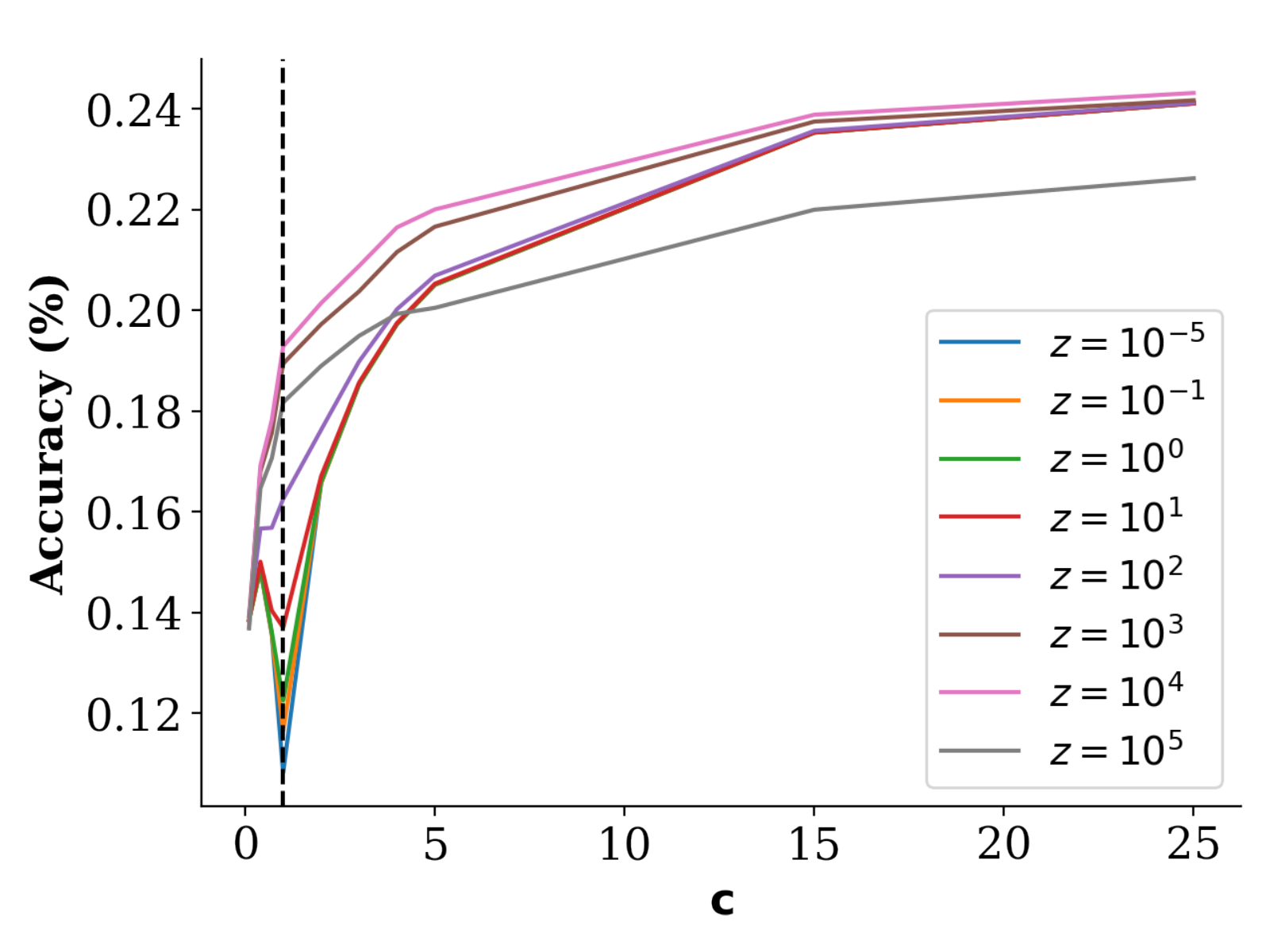}
        \caption{$n=40$}
        \label{fig:small_cifar_40_zoom}
    \end{subfigure}
    \hfill
    \begin{subfigure}[b]{0.24\textwidth}
        \centering
        \includegraphics[width=\textwidth]{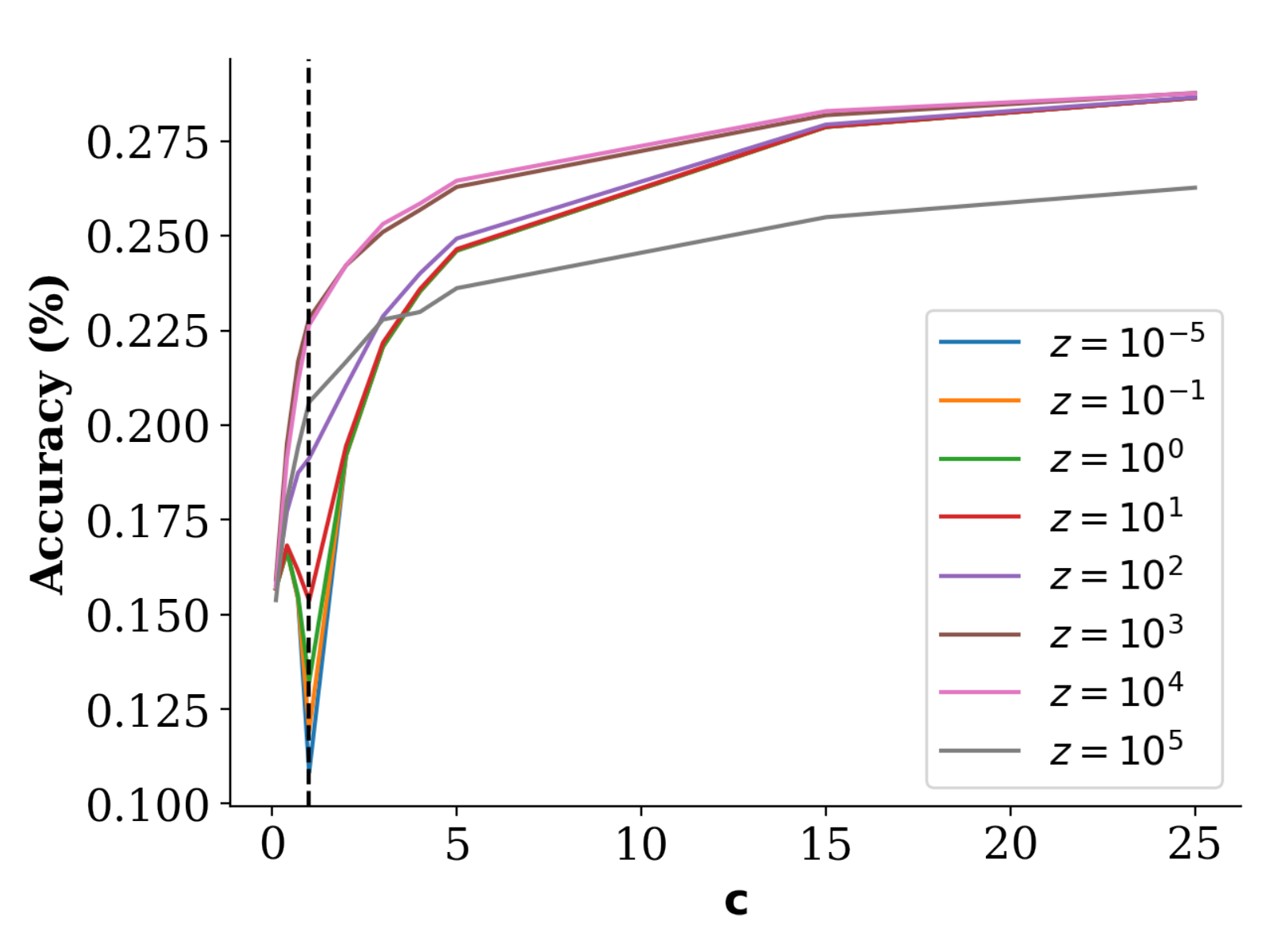}
        \caption{$n=80$}
        \label{fig:small_cifar_80_zoom}
    \end{subfigure}
    \begin{subfigure}[b]{0.24\textwidth}
        \centering
        \includegraphics[width=\textwidth]{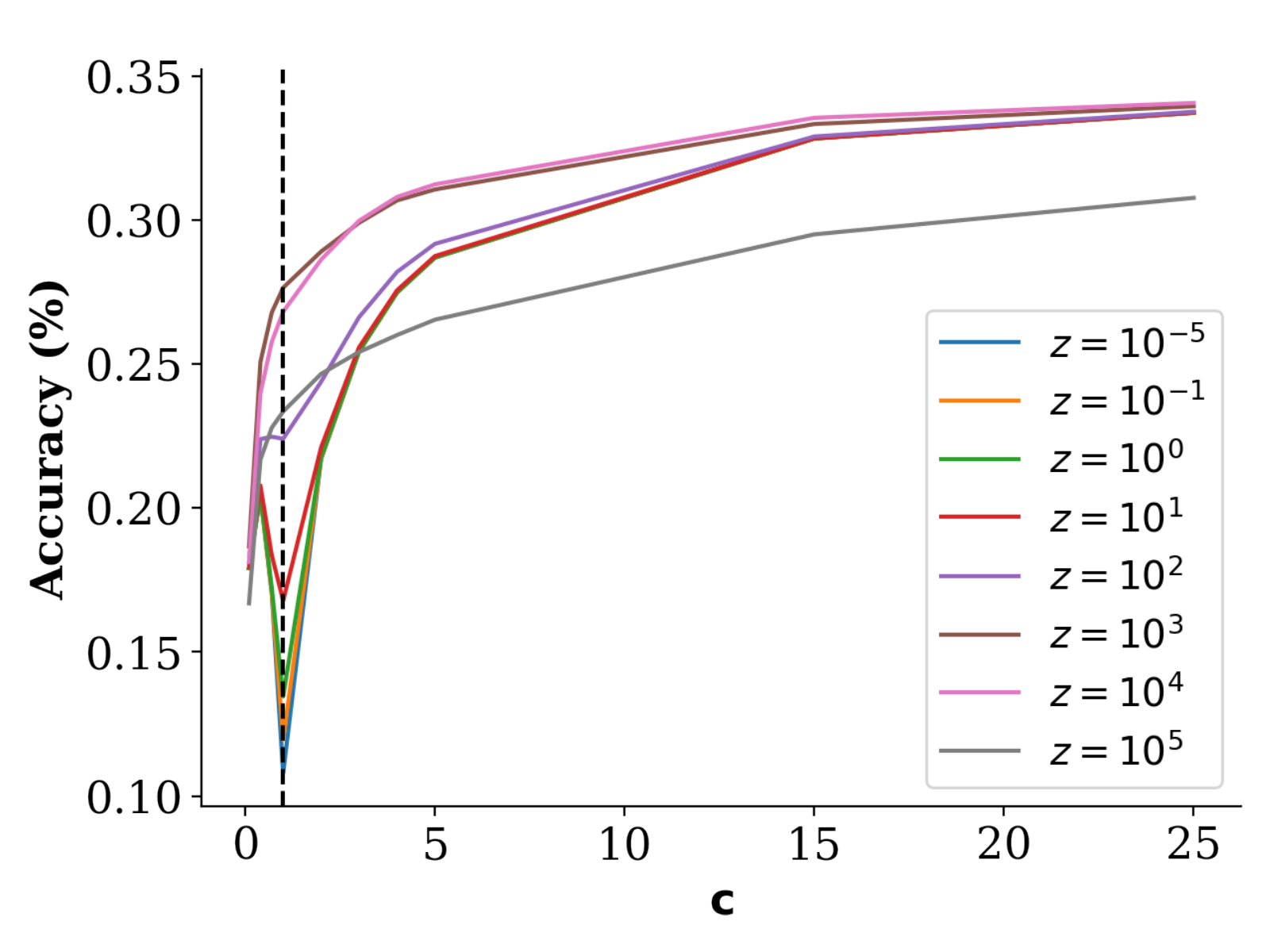}
        \caption{$n=160$}
        \label{fig:small_cifar_160_zoom}
    \end{subfigure}
    \hfill
    \begin{subfigure}[b]{0.24\textwidth}
        \centering
        \includegraphics[width=\textwidth]{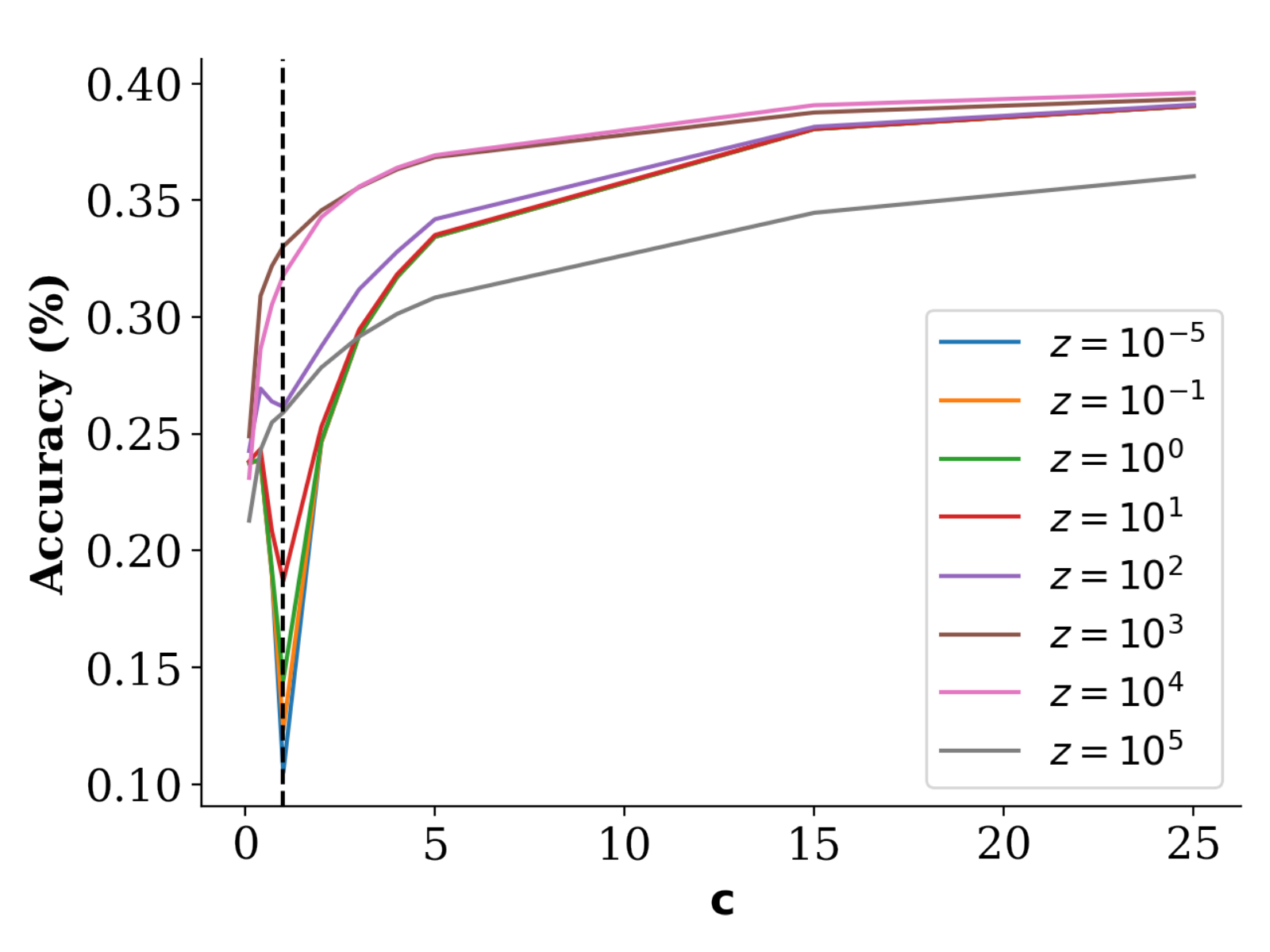}
        \caption{$n=320$}
        \label{fig:small_cifar_320_zoom}
    \end{subfigure}
    \hfill
    \begin{subfigure}[b]{0.24\textwidth}
        \centering
        \includegraphics[width=\textwidth]{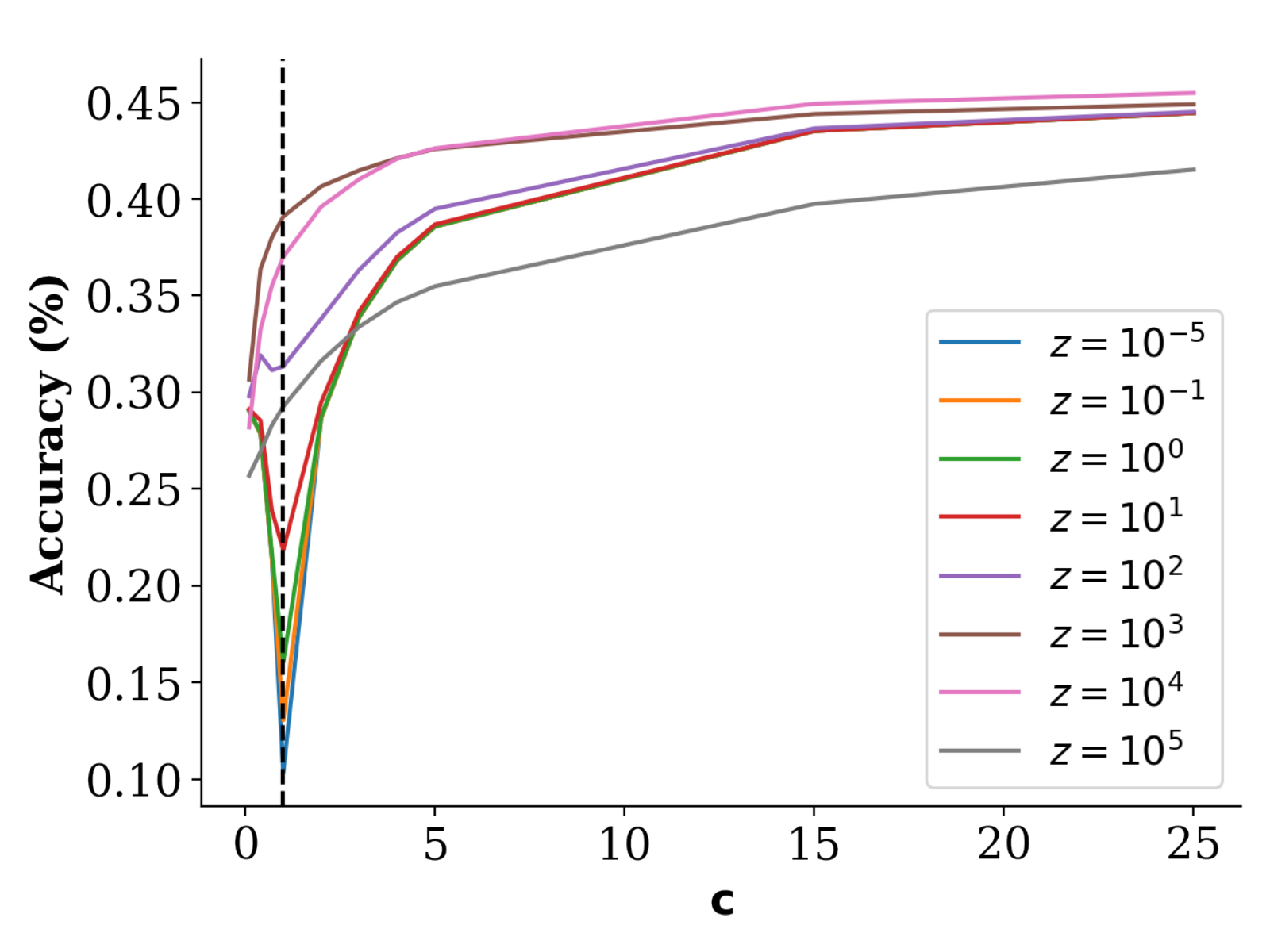}
        \caption{$n=640$}
        \label{fig:small_cifar_640_zoom}
    \end{subfigure}
    \hfill
    \begin{subfigure}[b]{0.24\textwidth}
        \centering
        \includegraphics[width=\textwidth]{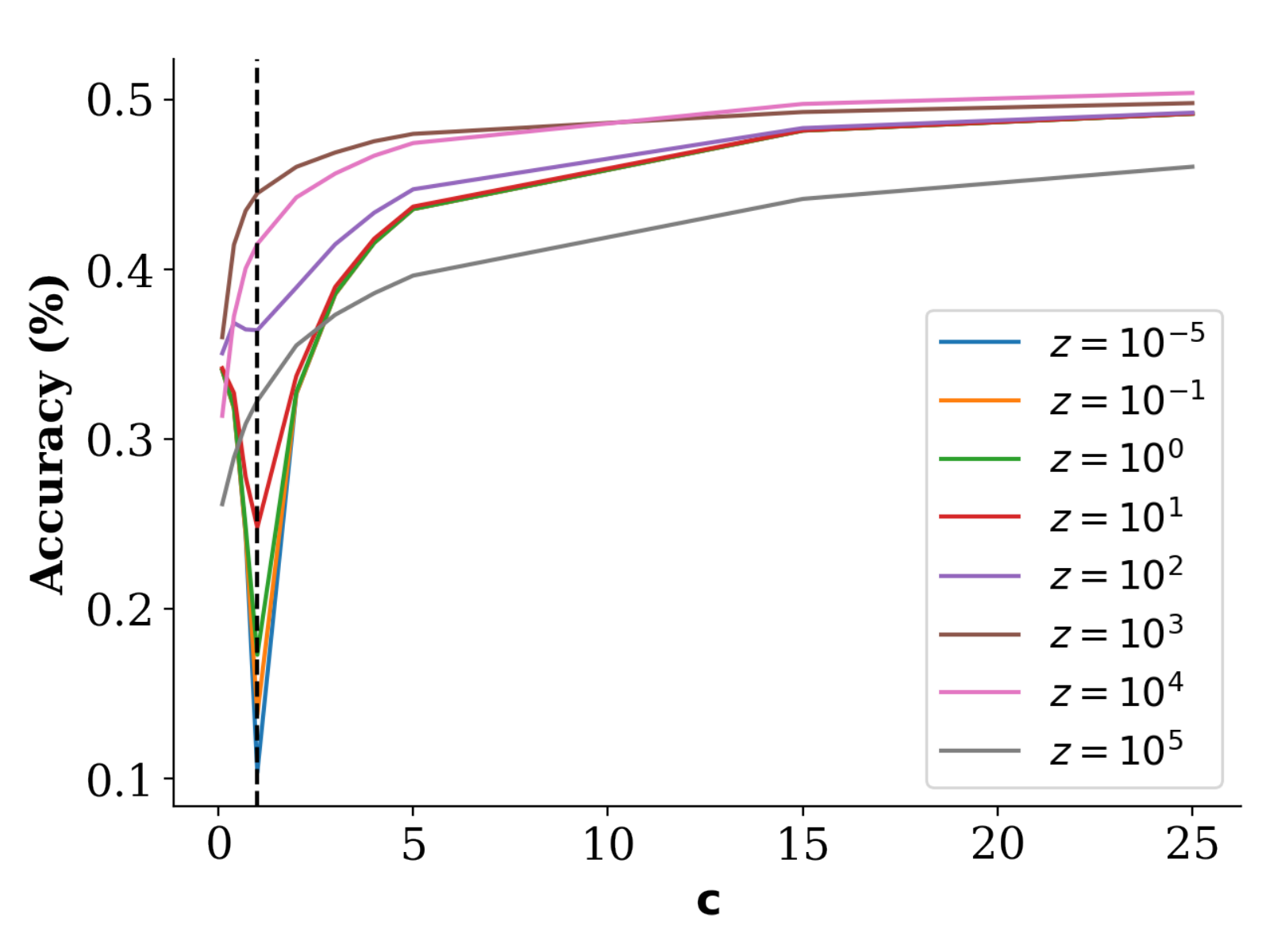}
        \caption{$n=1280$}
        \label{fig:small_cifar_1280_zoom}
    \end{subfigure}
    \caption{The figures above show FABReg's test accuracy increases with the model's complexity $c$ on different ($n$) subsampled  CIFAR-10 datasets. The expanded dataset follows similar patterns. We truncate the curve for $c > 25$ to better show the double descent phenomena. The full curves are shown in Appendix \ref{sec:additional_results}. Notice that when the shrinkage is sufficiently high, the double descent disappears, and the accuracy monotonically increases in complexity. Following \cite{kelly2022virtue}, we name this phenomenon {\it the virtue of complexity} (VoC). The test accuracy is averaged over 20 independent runs.} 
    \label{fig:small_cifar_voc_curve_zoom}
\end{figure*}

\begin{figure*}[]
    \centering
    \begin{subfigure}{0.45\textwidth}
        \centering
        \includegraphics[width=\textwidth]{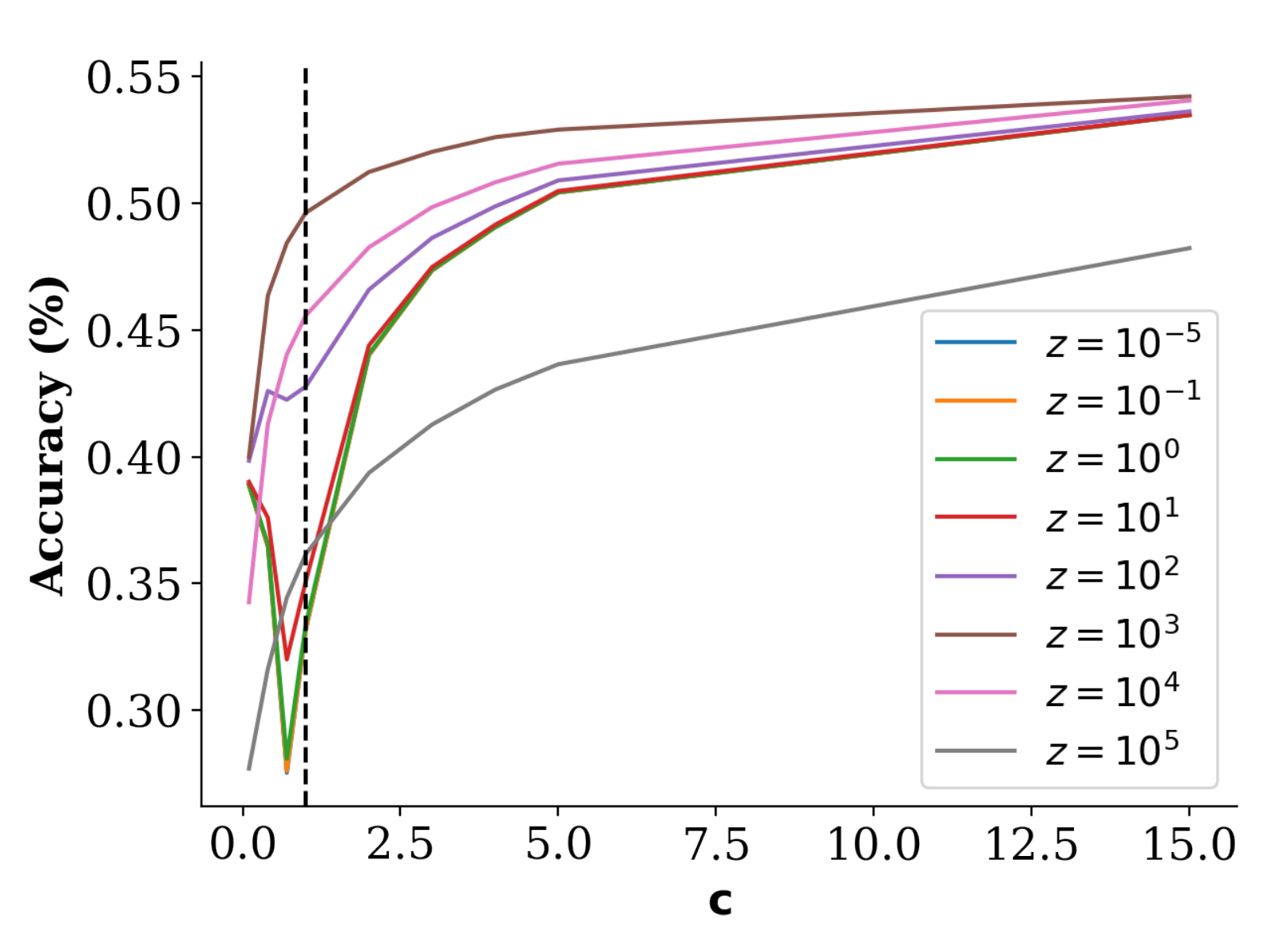}
        \caption{$n=2560$}
        \label{fig:nu_256}
    \end{subfigure}
    \begin{subfigure}{0.45\textwidth}
        \centering
        \includegraphics[width=\textwidth]{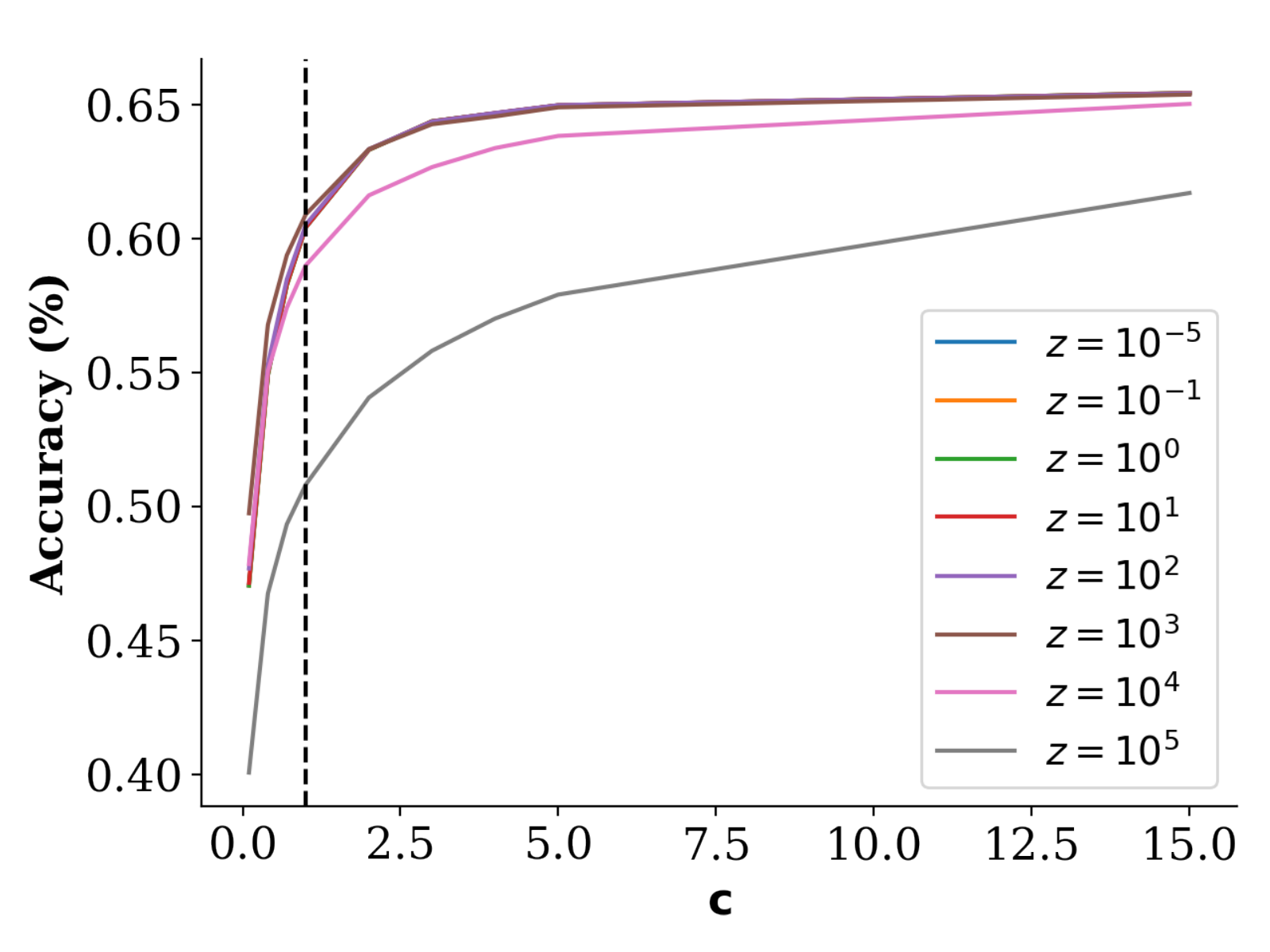}
        \caption{$n=50000$}
        \label{fig:nu_50000}
    \end{subfigure}
    
    \caption{The figures above show FABReg's test accuracy increases with the model's complexity $c$ on the subsampled CIFAR-10 dataset~\ref{fig:nu_256} and the full CIFAR-10 dataset~\ref{fig:nu_50000}. FABReg is trained using a $\nu=2000$ low-rank covariance matrix approximation. Notice that we still observe a (shifted) double descent when $\nu \approx n.$ The same phenomenon disappears when $\nu \ll n.$ The test accuracy is averaged over five independent runs.}
    \label{fig:nu_voc_curve}
\end{figure*}

\subsubsection{Big Datasets}
\label{sec:big_datasets}

In this section, we repeat the same experiments described in Section~\ref{sec:small_dataset}, but we extend the training set size $n$ up to the full CIFAR-10 dataset. For each $n$, we train FABReg, FABReg-$\nu$ with a rank-$\nu$ approximation as described in Algorithm~\ref{alg:spectral}, and the min-batch FABReg. We use $\nu=2000$ and $\text{batch size}=2000$ in the last two algorithms. Following  \cite{arora2019harnessing}, we train ResNet-34 as the benchmark for 160 epochs, with an initial learning rate of 0.001 and a batch size of 32. We decrease the learning rate by ten at epochs 80 and 120. ResNet-34 always reaches close to perfect accuracy on the training set, i.e., above 99\%. We run each training five times and report mean out-of-sample performance and its standard deviation. As the training sample is sufficiently large already, we set the model complexity to {\it only} $c=15$, meaning that for the full sample, FABReg performs a random feature ridge regression with $P=7.5 \times 10^{5}$. We report the results in Tables~\ref{sec:big_datasets} and \ref{table:batch_and_nu}. 
    
\begin{table*}[h]
    \caption{The table below shows the average test accuracy and standard deviation of ResNet-34 and FABR on the subsampled and full CIFAR-10 dataset. The test accuracy is average over five independent runs.}
    \centering
    \label{table:big_dataset_cifar}

\scalebox{0.6}{
\begin{tabular}{rccccc}
\toprule
    n &          ResNet-34 &                z=1 &                z=100 &            z=10000 &           z=100000 \\
\midrule
 2560 & 48.12\% $\pm$ 0.69\% & 52.24\% $\pm$ 0.29\% & 52.45\% $\pm$ 0.21\% & \textbf{54.29\% $\pm$ 0.44\%} & 48.28\% $\pm$ 0.37\% \\
 5120 & 56.03\% $\pm$ 0.82\% & 55.34\% $\pm$ 0.32\% & 55.74\% $\pm$ 0.34\% &  \textbf{58.29\% $\pm$ 0.20\%} & 52.06\% $\pm$ 0.08\% \\
10240 & \textbf{63.21\% $\pm$ 0.26\%} & 58.36\% $\pm$ 0.45\% & 58.86\% $\pm$ 0.54\% & 62.17\% $\pm$ 0.35\% & 55.75\% $\pm$ 0.18\% \\
20480 & \textbf{69.24\% $\pm$ 0.47\%} & 61.08\% $\pm$ 0.17\% & 61.65\% $\pm$ 0.27\% & 65.12\% $\pm$ 0.19\% & 59.34\% $\pm$ 0.14\% \\
50000 & \textbf{75.34\% $\pm$ 0.21\%} &  66.38\% $\pm$ 0.00\% &  66.98\% $\pm$ 0.00\% &  68.62\% $\pm$ 0.00\% &  63.25\% $\pm$ 0.00\% \\
\bottomrule
\end{tabular}
}
\end{table*}

\begin{table*}[t]
    \centering
    \caption{The table below shows the average test accuracy and standard deviation of FABReg-$\nu$ and mini-batch FABReg on the subsampled and full CIFAR-10 dataset. The test accuracy is average over five independent runs.}
    \scalebox{0.6}{
    \begin{tabular}{ccccccccc}
        \toprule
        \multirow{2}{*}{} &
        \multicolumn{2}{c}{$z=1$} &
        \multicolumn{2}{c}{$z=100$} &
        \multicolumn{2}{c}{$z=10000$} &
        \multicolumn{2}{c}{$z=100000$} \\
        \cmidrule{2-3} \cmidrule{4-5}  \cmidrule{6-7}  \cmidrule{8-9}

      FABReg & batch $= 2000$ &         $\nu=2000$  &      batch $= 2000$  &  $\nu=2000$  &  batch $= 2000$ &     $\nu=2000$ & batch $= 2000$ &    $\nu=2000$ \\
      n & & & & & & & & \\
    \midrule
    2560 & 53.13\% $\pm$ 0.38\% & 53.48\% $\pm$ 0.22\% & 53.15\% $\pm$ 0.42\% & 53.63\% $\pm$ 0.24\% & 52.01\% $\pm$ 0.51\% & 54.05\% $\pm$ 0.44\% & 46.78\% $\pm$ 0.52\% & 48.23\% $\pm$ 0.34\% \\
    5120 & 57.68\% $\pm$ 0.18\% & 57.63\% $\pm$ 0.19\% &  57.70\% $\pm$ 0.16\% & 57.63\% $\pm$ 0.18\% & 56.83\% $\pm$ 0.27\% & 57.53\% $\pm$ 0.11\% & 51.42\% $\pm$ 0.22\% & 51.75\% $\pm$ 0.14\% \\
    10240 & 59.79\% $\pm$ 0.35\% &  61.20\% $\pm$ 0.39\% & 59.79\% $\pm$ 0.35\% &  61.20\% $\pm$ 0.38\% & 58.63\% $\pm$ 0.28\% & 60.63\% $\pm$ 0.21\% & 53.73\% $\pm$ 0.37\% & 55.16\% $\pm$ 0.34\% \\
    20480 &  61.56\% $\pm$ 0.35\% &  63.50\% $\pm$ 0.12\% & 61.55\% $\pm$ 0.37\% &  63.50\% $\pm$ 0.13\% &   60.90\% $\pm$ 0.20\% & 62.92\% $\pm$ 0.12\% &  57.10\% $\pm$ 0.19\% &  58.40\% $\pm$ 0.21\% \\

50000 &  62.74\% $\pm$ 0.10\% & 65.45\% $\pm$ 0.18\% &  62.74\% $\pm$ 0.10\% & 65.44\% $\pm$ 0.18\% & 62.35\% $\pm$ 0.05\% & 65.04\% $\pm$ 0.19\% & 59.99\% $\pm$ 0.02\% & 61.71\% $\pm$ 0.09\% \\
    \bottomrule
    \end{tabular}
    }

    \label{table:batch_and_nu}
\end{table*}
The experiment delivers a number of additional conclusions:
\begin{itemize}
    \item First, we observe that, while for small train sample sizes of $n \le 10000$, simple kernel methods achieve performance comparable with that of DNNs, this is not the case for $n > 20000.$ Beating DNNs on big datasets with  shallow methods requires more complex kernels, such as those in \cite{shankar2020neural,li2019enhanced}.
    
    \item Second, we confirm the findings of \cite{ma2017diving, lee2020finite} suggesting that the role of small 
    
    eigenvalues is important. For example, FABReg-$\nu$ with $\nu=2000$ loses several percent of accuracy on larger datasets.
    
    \item Third, surprisingly, both the mini-batch FABReg and FABReg-$\nu$ sometimes achieve higher accuracy than the full sample regression on moderately-sized datasets. See Tables~\ref{table:big_dataset_cifar} and \ref{table:batch_and_nu}. Understanding these phenomena is an interesting direction for future research. 
    
    \item Fourth, the double descent phenomenon naturally appears for both FABReg-$\nu$ and the mini-batch FABReg but only when $\nu \approx n$ or $\text{batch size} \approx n$. However, the double descent phenomenon disappears when $\nu \ll n$. This intriguing finding is shown in Figure \ref{fig:nu_voc_curve} for FABReg-$\nu$, and in Appendix \ref{sec:additional_results} for the mini-batch FABReg.
    
    \item Fifth, on average, FABReg-$\nu$ outperforms mini-batch FABReg on larger datasets.
\end{itemize}

\section{Conclusion and Discussion}
\label{sec:conclusion}
The recent discovery of the equivalence between infinitely wide neural networks (NNs) in the lazy training regime and neural tangent kernels (NTKs) \cite{jacot2018neural} has revived interest in kernel methods. However, these kernels are extremely complex and usually require running on big and expensive computing clusters \cite{avron2017faster, shankar2020neural} due to memory (RAM) requirements. This paper proposes a highly scalable random features ridge regression that can run on a simple laptop. We name it Fast Annihilating Batch Regression (FABReg). Thanks to the linear algebraic properties of covariance matrices, this tool can be applied to any kernel and any way of generating random features. Moreover, we provide several experimental results to assess its performance. We show how FABReg can outperform (in training and prediction speed) the current state-of-the-art ridge classifier's implementation. Then, we show how a simple data representation strategy combined with a random features ridge regression can outperform complicated kernels (CNTKs) and over-parametrized Deep Neural Networks (ResNet-34) in the few-shot learning setting. The experiments section concludes by showing additional results on big datasets. In this paper, we focus on very simple classes of random features. Recent findings (see, e.g., \cite{shankar2020neural}) suggest that highly complex kernel architectures are necessary to achieve competitive performance on large datasets. Since each kernel regression can be approximated with random features, our method is potentially applicable to these kernels as well. However, directly computing the random feature representation of such complex kernels is non-trivial and we leave it for future research. 
% Acknowledgements should only appear in the accepted version.
%\section*{Acknowledgements}

\clearpage

% In the unusual situation where you want a paper to appear in the
% references without citing it in the main text, use \nocite
%\nocite{langley00}
\bibliographystyle{plainnat}
\bibliography{references}

%%%%%%%%%%%%%%%%%%%%%%%%%%%%%%%%%%%%%%%%%%%%%%%%%%%%%%%%%%%%%%%%%%%%%%%%%%%%%%%
%%%%%%%%%%%%%%%%%%%%%%%%%%%%%%%%%%%%%%%%%%%%%%%%%%%%%%%%%%%%%%%%%%%%%%%%%%%%%%%
% APPEndIX
%%%%%%%%%%%%%%%%%%%%%%%%%%%%%%%%%%%%%%%%%%%%%%%%%%%%%%%%%%%%%%%%%%%%%%%%%%%%%%%
%%%%%%%%%%%%%%%%%%%%%%%%%%%%%%%%%%%%%%%%%%%%%%%%%%%%%%%%%%%%%%%%%%%%%%%%%%%%%%%
\newpage
\appendix
\onecolumn
\section{Proofs}

\begin{proof}[Proof of Lemma \ref{bound}] We have 
\begin{equation}
\begin{aligned}
&\Psi_{k+1}\ =\ \Psi_k\ +\ S_{k+1}S_{k+1}'\\
&\tilde\Psi_{k+1}\ =\ \hat\Psi_k\ +\ S_{k+1}S_{k+1}'\\
&\hat\Psi_{k+1}\ =\ P_{k+1}\tilde\Psi_{k+1}P_{k+1}\,.
\end{aligned}
\end{equation}
By the definition of the spectral projection, we have 
\begin{equation}
\|\tilde\Psi_{k+1}-\hat\Psi_{k+1}\|\ \le\ \gl_{\nu+1}(\tilde\Psi_{k+1})\ \le\ 
\gl_{\nu+1}(\Psi_{k+1})\,, 
\end{equation}
and hence 
\begin{equation}
\begin{aligned}
&\|\Psi_{k+1}-\hat\Psi_{k+1}\|\\ 
&\le\ \|\Psi_{k+1}-\tilde\Psi_{k+1}\|+\|\tilde\Psi_{k+1}-\hat\Psi_{k+1}\|\\
&=\ \|\Psi_{k}-\hat\Psi_{k}\|+\|\tilde\Psi_{k+1}-\hat\Psi_{k+1}\|)\\
&\le\ \|\Psi_{k}-\hat\Psi_{k}\|\ +\ \gl_{\nu+1}(\Psi_{k+1})\,,
\end{aligned}
\end{equation}
and the claim follows by induction. The last claim follows from the simple inequality 
\begin{equation}
\|(\Psi_{k+1}+zI)^{-1}-(\hat\Psi_{k+1}+zI)^{-1}\|\ \le\ z^{-2}\|\Psi_{k+1}-\hat\Psi_{k+1}\|\,.
\end{equation}
\end{proof}

\section{Additional Experimental Results}
\label{sec:additional_results}
This section provides additional experiments and findings that may help the community with future research.

First, we dive into more details about our comparison with {\it sklearn}. Table~\ref{table:sklearn_vs_FABReg} shows a more detailed training and prediction time comparison between FABReg and {\it sklearn}. In particular, we average training and prediction time over five independent runs. The experiment settings are explained in Section~\ref{sec:sklearn}. We show how one, depending on the number shrinkages $|z|$, would start considering using FABReg when the number of observations in the dataset $n \approx 5000$. In this case, we have used the {\it numpy} linear algebra library to decompose FABReg's covariance matrix, which appears to be faster than the {\it scipy} counterpart. We share our code in the following repository: \url{https://github.com/tengandreaxu/fabr}. 

\begin{table}[b]
\centering
\caption{The table below shows FABReg and {\it sklearn}'s training and prediction time (in seconds) on a synthetic dataset. We vary the dataset number of features $d$ and the number of shrinkages ($|z|$). We report the average running time and the standard deviation over five independent runs.}
\scalebox{0.6}{
\begin{tabular}{ccccccccc}
\toprule

  \multirow{2}{*}{} &
  \multicolumn{2}{c}{$|z|=5$} &
  \multicolumn{2}{c}{$|z|=10$} &
  \multicolumn{2}{c}{$|z|=20$} &
  \multicolumn{2}{c}{$|z|=50$}\\
  \cmidrule{2-3} \cmidrule{4-5}  \cmidrule{6-7}  \cmidrule{8-9}
&  FABReg & ${\it sklearn}$ &  FABReg & ${\it sklearn}$ &  FABReg & ${\it sklearn}$ & FABReg & ${\it sklearn}$ \\
$d$    &                &                           &                 &                            &                 &                            &                 &                            \\
\midrule
10     &   7.72s $\pm$ 0.36s &          \textbf{0.01s $\pm$ 0.00s} &    6.90s $\pm$ 0.77s &           \textbf{0.02s $\pm$ 0.00s} &   7.04s $\pm$ 0.67s &           \textbf{0.03s $\pm$ 0.00s} &   7.44s $\pm$ 0.57s &         \textbf{ 0.07s $\pm$ 0.01s} \\
100    &   7.35s $\pm$ 0.36s &         \textbf{0.06s $\pm$ 0.02s} &   6.58s $\pm$ 0.34s &          \textbf{0.11s $\pm$ 0.01s} &   7.61s $\pm$ 1.14s &          \textbf{0.24s $\pm$ 0.04s} &    7.3s $\pm$ 0.49s &          \textbf{0.53s $\pm$ 0.06s} \\
500    &   7.37s $\pm$ 0.44s &         \textbf{0.33s $\pm$ 0.16s} &   6.81s $\pm$ 0.25s &          \textbf{0.54s $\pm$ 0.03s} &   7.02s $\pm$ 0.35s &          \textbf{1.01s $\pm$ 0.07s} &   7.44s $\pm$ 0.48s &          \textbf{2.41s $\pm$ 0.21s} \\
1000   &   7.62s $\pm$ 0.31s &         \textbf{0.58s $\pm$ 0.21s} &   7.38s $\pm$ 0.23s &          \textbf{1.06s $\pm$ 0.04s} &   7.51s $\pm$ 0.24s &          \textbf{2.04s $\pm$ 0.04s} &   7.69s $\pm$ 0.08s &          \textbf{4.79s $\pm$ 0.36s} \\
2000   &   8.33s $\pm$ 0.42s &         \textbf{1.21s $\pm$ 0.03s} &   8.09s $\pm$ 0.73s &          \textbf{2.44s $\pm$ 0.05s} &   8.33s $\pm$ 0.24s &          \textbf{4.87s $\pm$ 0.07s} &   \textbf{8.29s $\pm$ 0.47s} &         12.21s $\pm$ 0.15s \\
3000   &   9.24s $\pm$ 0.25s &         \textbf{2.49s $\pm$ 0.05s} &   9.18s $\pm$ 0.41s &          \textbf{5.08s $\pm$ 0.03s} &    \textbf{9.51s $\pm$ 0.20s} &         10.06s $\pm$ 0.02s &   \textbf{9.67s $\pm$ 0.41s} &         25.67s $\pm$ 0.23s \\
5000   &  10.64s $\pm$ 0.86s &         \textbf{5.36s $\pm$ 0.05s} &   11.01s $\pm$ 0.7s &         \textbf{10.74s $\pm$ 0.06s} &  \textbf{11.57s $\pm$ 0.81s} &         21.31s $\pm$ 0.12s &  \textbf{11.54s $\pm$ 0.41s} &         54.18s $\pm$ 0.73s \\
10000  &  \textbf{11.49s $\pm$ 0.66s} &        17.87s $\pm$ 8.58s &  \textbf{11.81s $\pm$ 0.47s} &        28.32s $\pm$ 10.53s &  \textbf{11.61s $\pm$ 0.49s} &         44.72s $\pm$ 9.99s &   \textbf{12.55s $\pm$ 0.3s} &       101.58s $\pm$ 15.66s \\
25000  &  \textbf{13.89s $\pm$ 0.21s} &        27.79s $\pm$ 8.75s &   \textbf{14.50s $\pm$ 0.45s} &         49.84s $\pm$ 9.68s &  \textbf{14.46s $\pm$ 0.96s} &        94.08s $\pm$ 10.94s &  \textbf{15.68s $\pm$ 0.74s} &       224.31s $\pm$ 11.75s \\
50000  &  \textbf{17.99s $\pm$ 0.22s} &        50.51s $\pm$ 8.99s &  \textbf{18.27s $\pm$ 0.37s} &        92.88s $\pm$ 10.45s &   \textbf{19.10s $\pm$ 0.37s} &       176.24s $\pm$ 10.07s &  \textbf{19.68s $\pm$ 0.85s} &       422.95s $\pm$ 13.22s \\
100000 &   \textbf{25.30s $\pm$ 0.39s} &        95.57s $\pm$ 0.25s &  \textbf{26.16s $\pm$ 0.46s} &        177.54s $\pm$ 3.77s &  \textbf{27.93s $\pm$ 0.35s} &        340.32s $\pm$ 3.74s &  \textbf{29.48s $\pm$ 1.38s} &        816.25s $\pm$ 4.35s \\
\bottomrule
\label{table:sklearn_vs_FABReg}
\end{tabular}

}
\end{table}

Second, while Figure~\ref{fig:small_cifar_voc_curve_zoom} shows FABReg's test accuracy on increasing complexity $c$ truncated curves, we present here the whole picture; i.e., Figure~\ref{fig:small_cifar_voc_curve} shows full FABReg's test accuracy increases with the model's complexity $c$ on different ($n$) subsampled CIFAR-10 datasets averaged over twenty independent runs. The expanded dataset follows similar patterns. Similar to Figure~\ref{fig:small_cifar_voc_curve_zoom}, one can notice that when the shrinkage is sufficiently high, the double descent disappears, and the accuracy monotonically increases in complexity.

Third, the double descent phenomenon naturally appears for both FABReg-$\nu$ and the mini-batch FABReg but only when $\nu \approx n$ or $\text{batch size} \approx n$. However, the double descent phenomenon disappears when $\nu \ll n$. This intriguing finding is shown in Figure \ref{fig:nu_voc_curve} for FABReg-$\nu$, and here, in Figure~\ref{fig:batch_voc_curve}, we report the same curves for mini-batch FABReg.

\begin{figure*}[t]
\captionsetup[subfigure]{justification=centering}
    \centering
    \begin{subfigure}[b]{0.24\textwidth}
        \centering
        \includegraphics[width=\textwidth]{1_voc_curve.pdf}
        \caption{$n=10$}
        \label{fig:small_cifar_10}
    \end{subfigure}
    \hfill
    \begin{subfigure}[b]{0.24\textwidth}
        \centering
        \includegraphics[width=\textwidth]{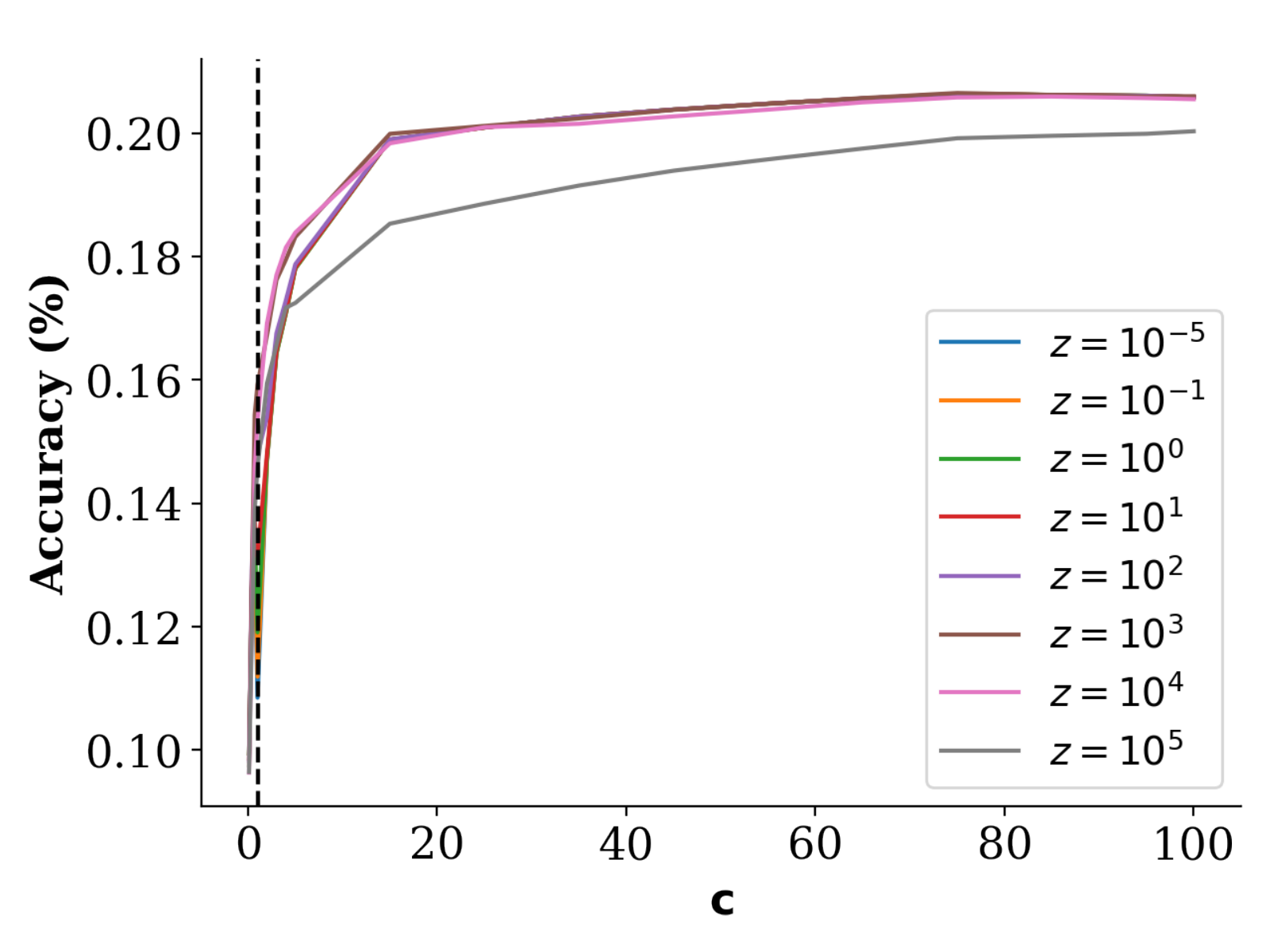}
        \caption{$n=20$}
        \label{fig:small_cifar_20}
    \end{subfigure}
    \hfill
    \begin{subfigure}[b]{0.24\textwidth}
        \centering
        \includegraphics[width=\textwidth]{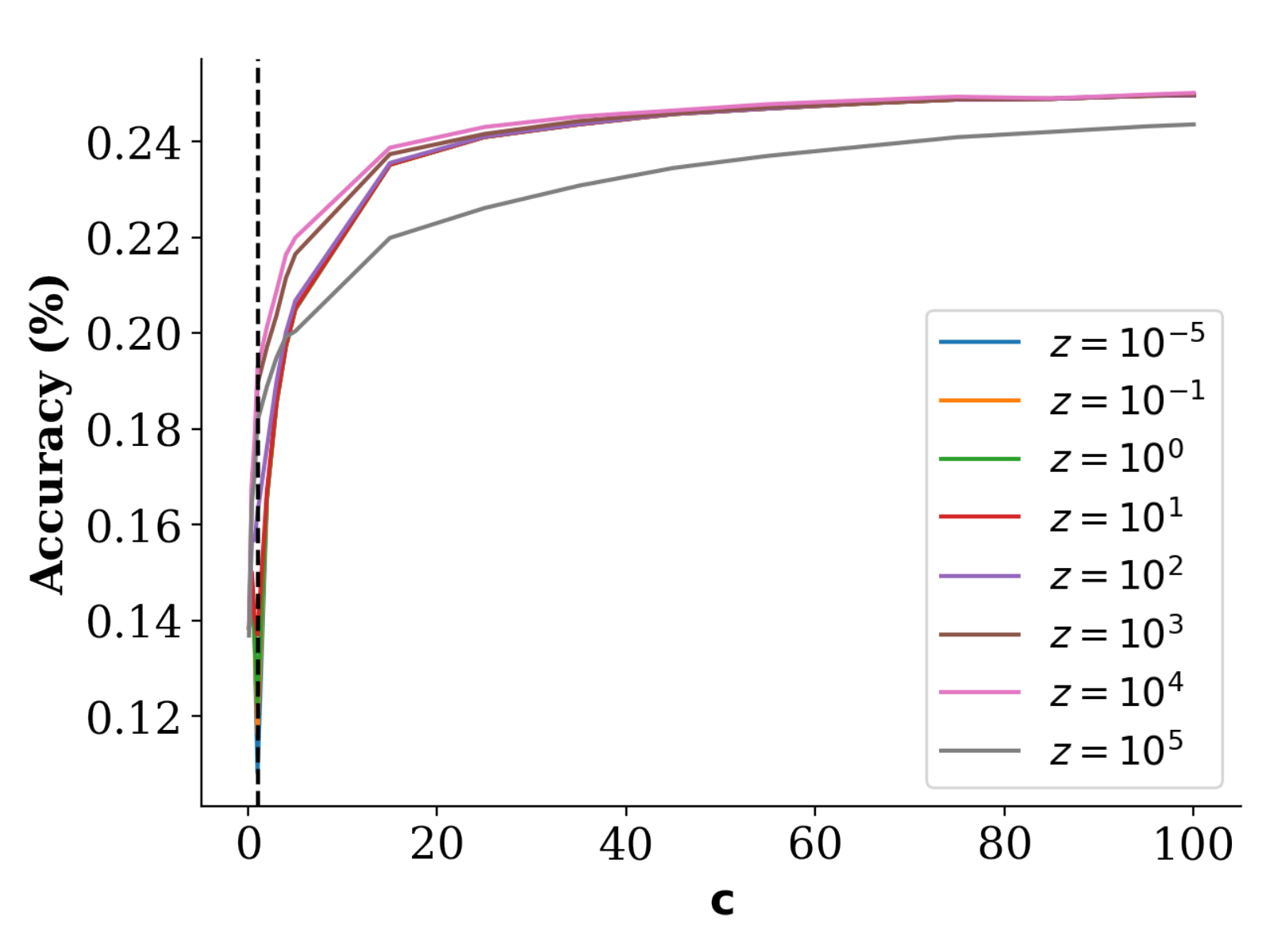}
        \caption{$n=40$}
        \label{fig:small_cifar_40}
    \end{subfigure}
    \hfill
    \begin{subfigure}[b]{0.24\textwidth}
        \centering
        \includegraphics[width=\textwidth]{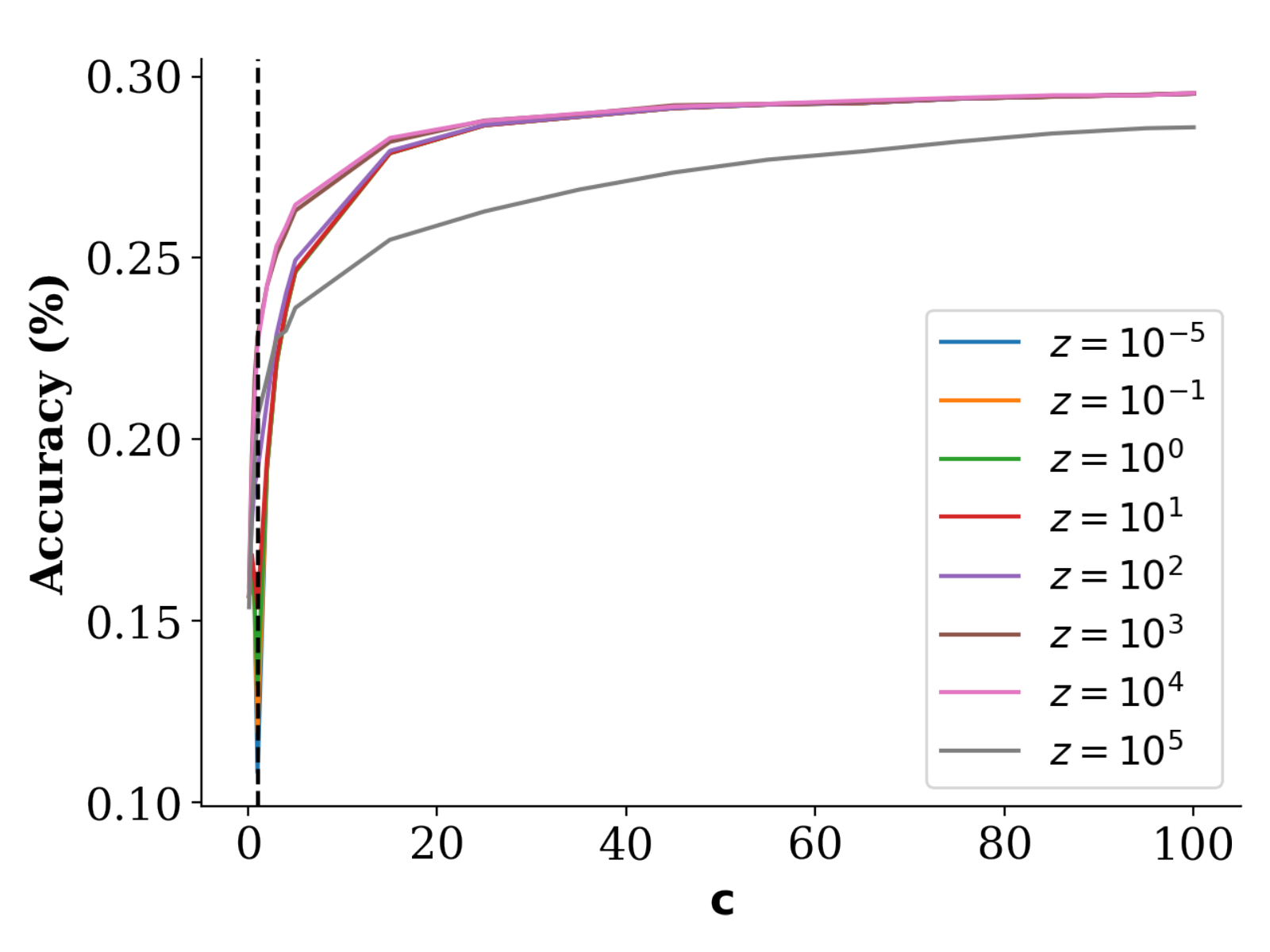}
        \caption{$n=80$}
        \label{fig:small_cifar_80}
    \end{subfigure}
    \begin{subfigure}[b]{0.24\textwidth}
        \centering
        \includegraphics[width=\textwidth]{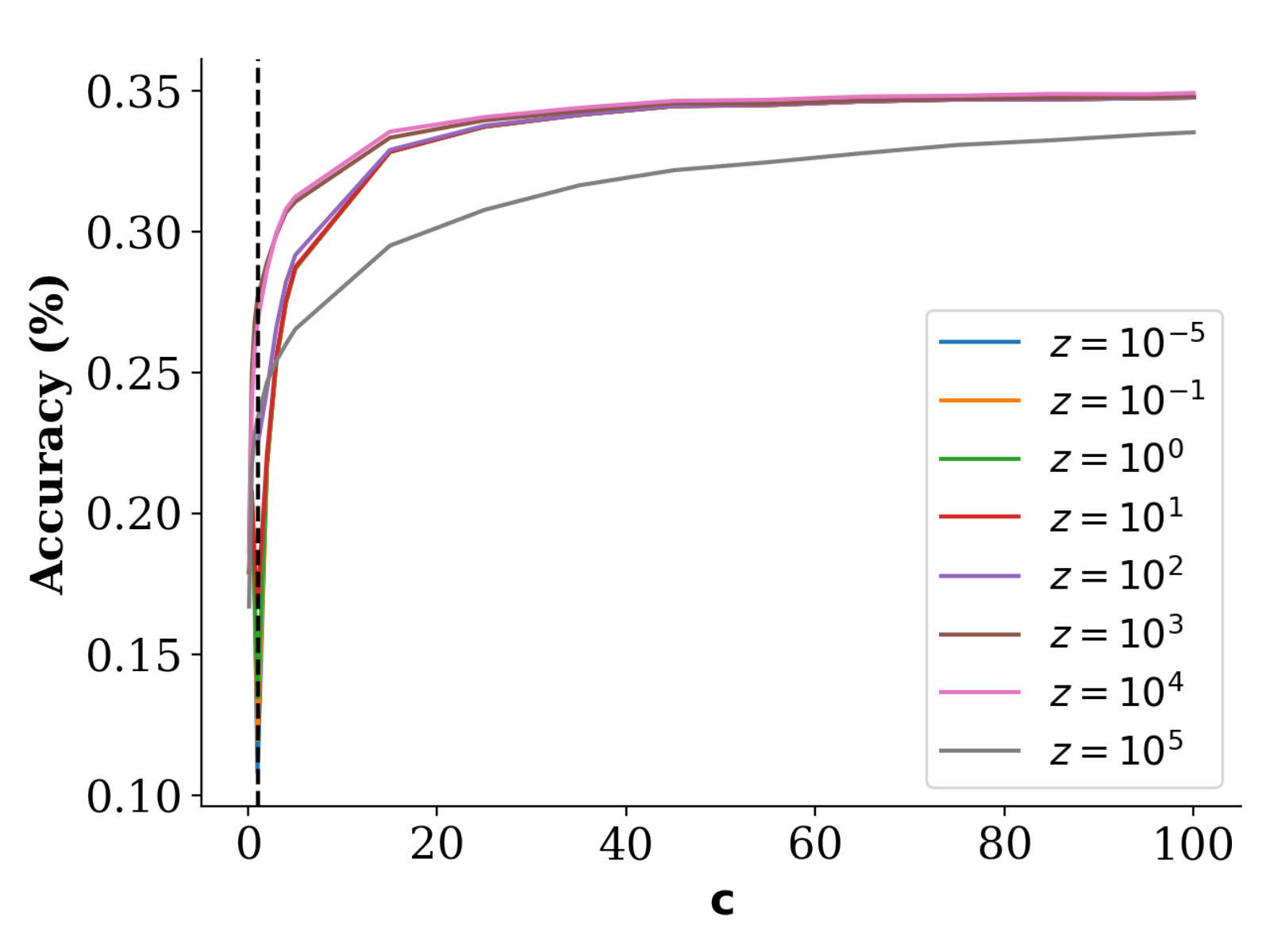}
        \caption{$n=160$}
        \label{fig:small_cifar_160}
    \end{subfigure}
    \hfill
    \begin{subfigure}[b]{0.24\textwidth}
        \centering
        \includegraphics[width=\textwidth]{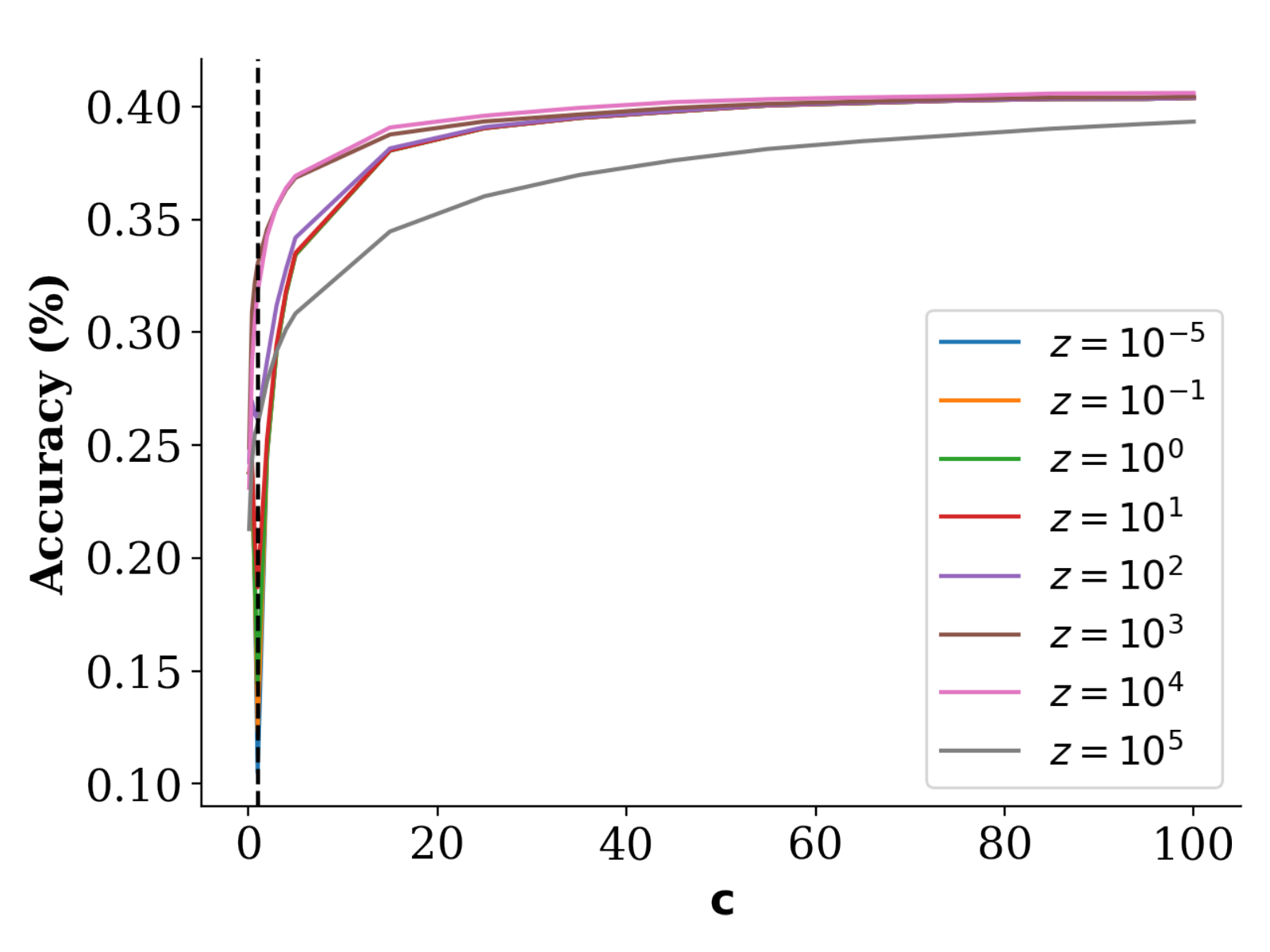}
        \caption{$n=320$}
        \label{fig:small_cifar_320}
    \end{subfigure}
    \hfill
    \begin{subfigure}[b]{0.24\textwidth}
        \centering
        \includegraphics[width=\textwidth]{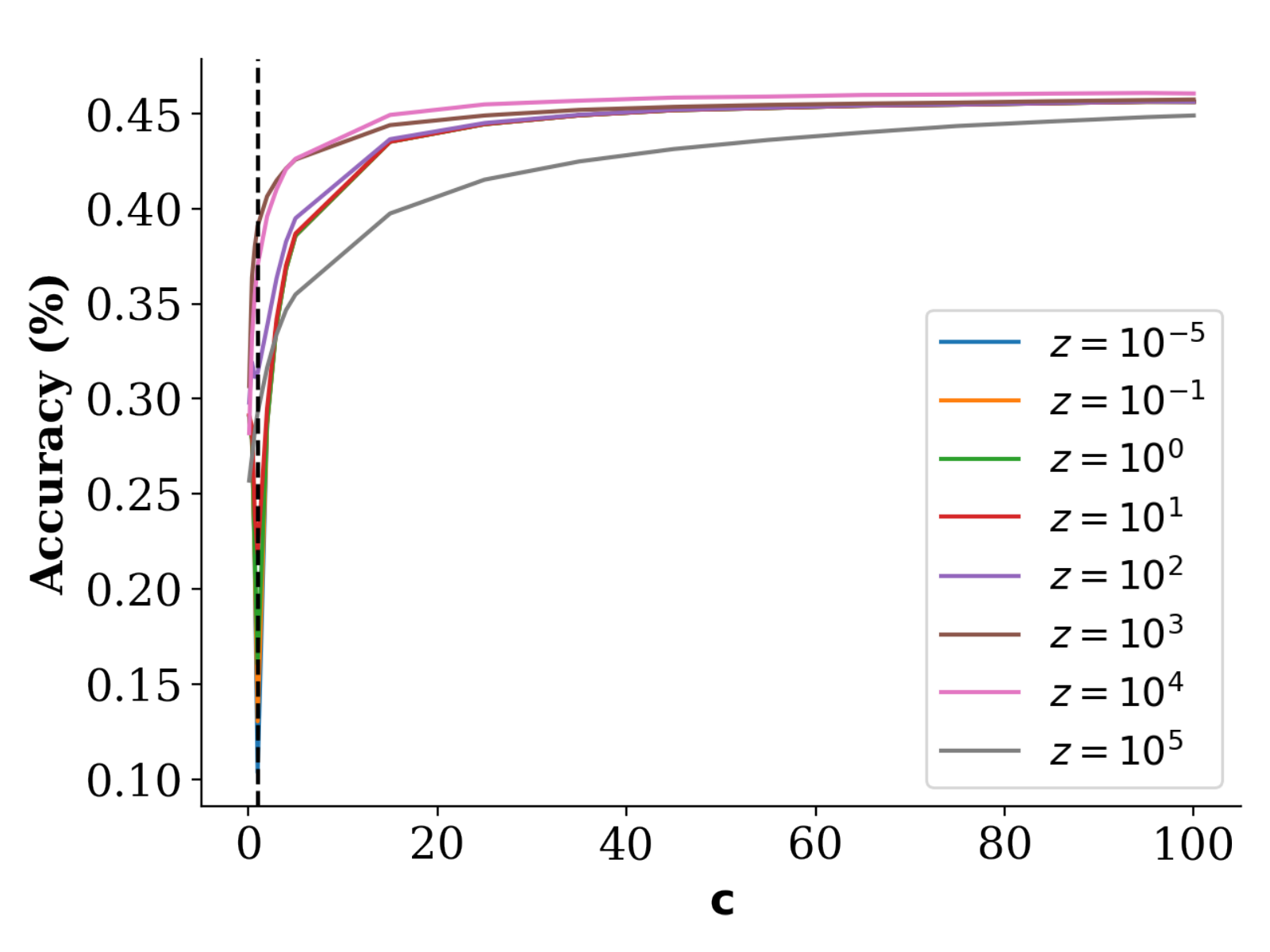}
        \caption{$n=640$}
        \label{fig:small_cifar_640}
    \end{subfigure}
    \hfill
    \begin{subfigure}[b]{0.24\textwidth}
        \centering
        \includegraphics[width=\textwidth]{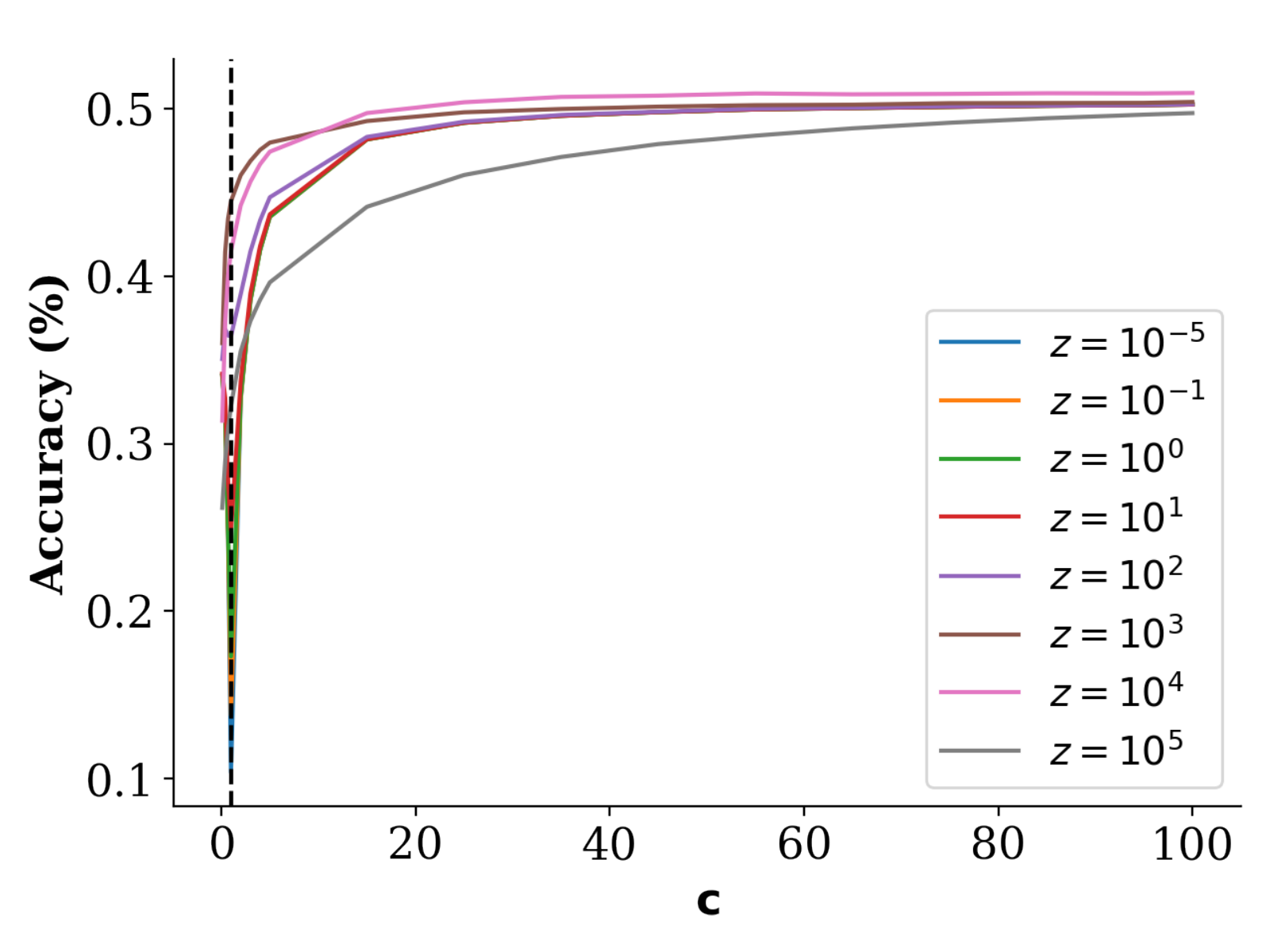}
        \caption{$n=1280$}
        \label{fig:small_cifar_1280}
    \end{subfigure}
    \caption{The figure above shows the full FABReg's accuracy increase with the model's complexity $c$ in the small dataset regime. The expanded dataset follows similar patterns.} 
    \label{fig:small_cifar_voc_curve}
\end{figure*}

\begin{figure*}[h!]
\captionsetup[subfigure]{justification=centering}
    \centering

    \begin{subfigure}[b]{0.45\textwidth}
        \centering
        \includegraphics[width=\textwidth]{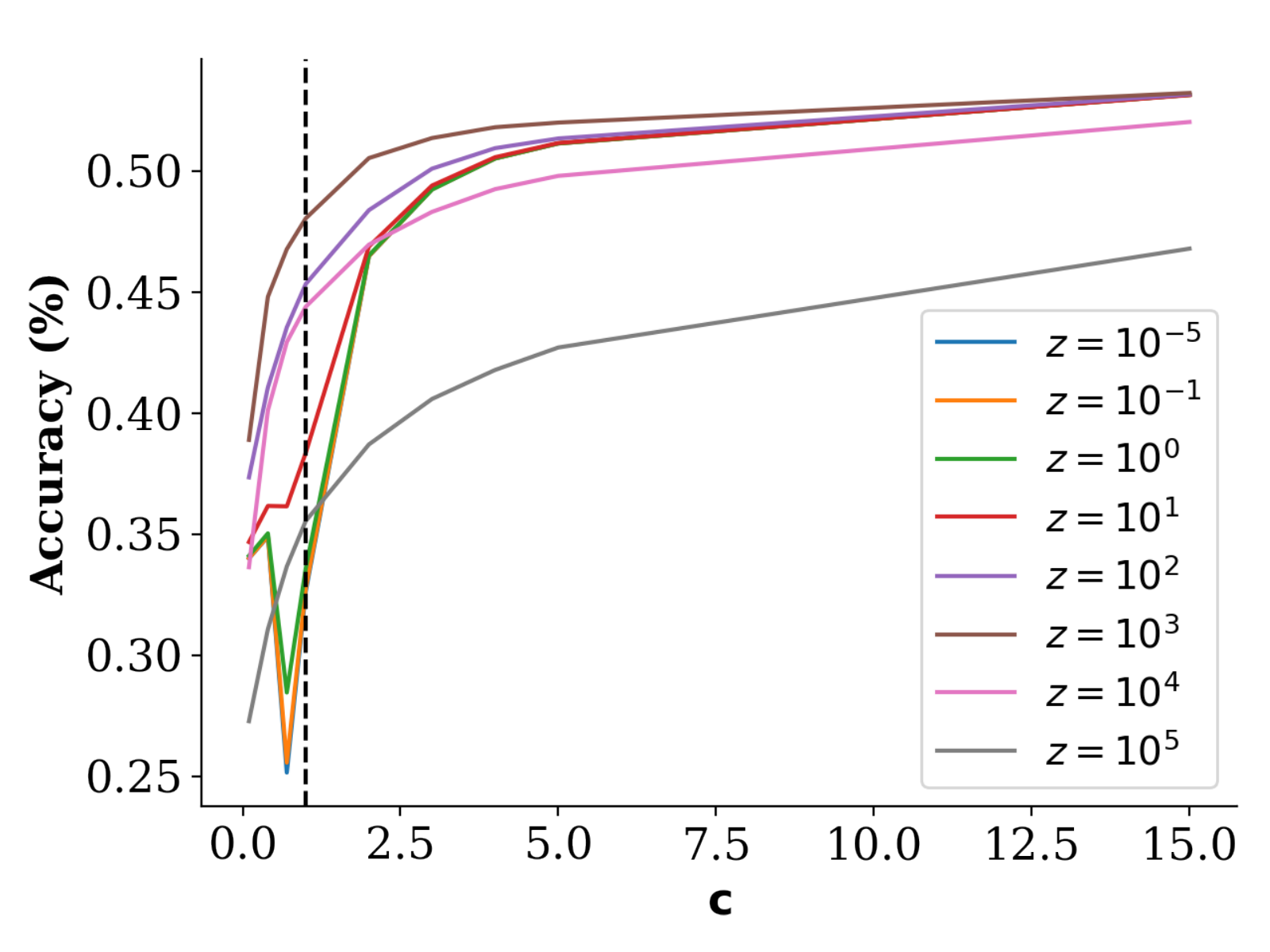}
        \caption{$n=2560$}
        \label{fig:batch_cifar_2560}
    \end{subfigure}
    \begin{subfigure}[b]{0.45\textwidth}
        \centering
        \includegraphics[width=\textwidth]{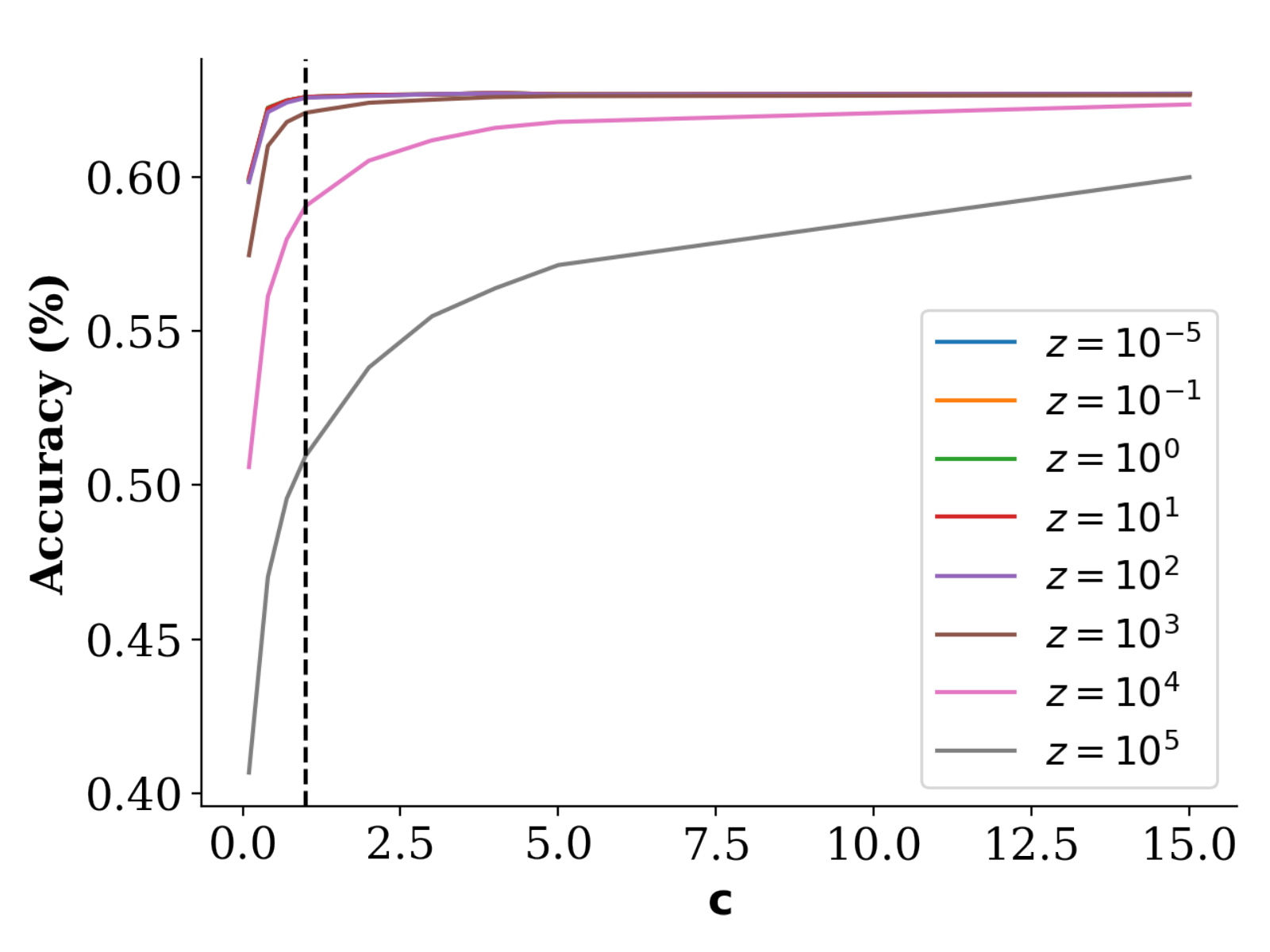}
        \caption{$n=50000$}
        \label{fig:batch_cifar_50000}
    \end{subfigure}
   
    \caption{Similar to Figure \ref{fig:nu_voc_curve}, the figures above show FABReg's test accuracy increases with the model's complexity $c$ on the subsampled CIFAR-10 dataset~\ref{fig:batch_cifar_2560} and the full CIFAR-10 dataset~\ref{fig:batch_cifar_50000}. FABReg trains using mini-batches with $\text{batch size=2000}$ in both cases. Notice that we still observe a (shifted) double descent when $\text{batch size} \approx n$, while the same phenomenon disappears when $\text{batch size} \ll n.$ The test accuracy is averaged over 5 independent runs.} 
    \label{fig:batch_voc_curve}
\end{figure*}
\clearpage

\end{document}